
\documentclass{article}

\usepackage[final,nonatbib]{neurips_2020}




\usepackage{microtype}
\usepackage{comment}
\usepackage{graphicx}
\usepackage{subfigure}
\usepackage{booktabs} 
\usepackage{amsmath}
\usepackage{amsthm}
\usepackage{enumitem}
\usepackage{caption} 
\usepackage[square,numbers]{natbib}
\captionsetup[table]{skip=10pt}
\usepackage{hyperref}





\usepackage{amsmath,amsfonts,bm}









\def\eqref#1{equation~\ref{#1}}









\def\1{\bm{1}}
\def\0{\bm{0}}




\def\rvx{{\mathbf{x}}}





\def\vone{{\bm{1}}}

\def\vx{{\bm{x}}}



\DeclareMathAlphabet{\mathsfit}{\encodingdefault}{\sfdefault}{m}{sl}
\SetMathAlphabet{\mathsfit}{bold}{\encodingdefault}{\sfdefault}{bx}{n}


\def\gH{{\mathcal{H}}}

\def\gP{{\mathcal{P}}}

\def\gS{{\mathcal{S}}}
\def\gT{{\mathcal{T}}}
\def\gU{{\mathcal{U}}}

\def\gX{{\mathcal{X}}}



\def\sN{{\mathbb{N}}}








\newcommand{\R}{\mathbb{R}}



\newcommand{\bx}{\mathbf{x}}

\newtheorem{proposition}{Theorem}
\newtheorem{appproposition}{Theorem}
\usepackage{todonotes}

\begin{document}

\title{(De)Randomized Smoothing for Certifiable Defense against Patch Attacks}

%




\author{%
  Alexander Levine and Soheil Feizi\\
  Department of Computer Science\\
  University of Maryland\\
  College Park, MD 20742 \\
  \texttt{\{alevine0, sfeizi\}@cs.umd.edu} \\
}

\vskip 0.3in
\maketitle


\def\gS{{\mathcal{S}}}
\def\gX{{\mathcal{X}}}
\def\gH{{\mathcal{H}}}
\def\gU{{\mathcal{U}}}
\def\gP{{\mathcal{P}}}
\def\gT{{\mathcal{T}}}

\def\gSe{{\mathcal{S}_{\nullsim}}}
\def\gXe{{\mathcal{X}_{\nullsim}}}
\def\vx{{\mathbf{x}}}
\def\nullsim{{\text{NULL}}}

\def\ehat{{\hat{\mathbf{e}}}}

\def\ablate{{{\textsc{Ablate}}}}
\def\R{{\mathbb{R}}}

\begin{abstract}

Patch adversarial attacks on images, in which the attacker can distort pixels within a region of bounded size, are an important threat model since they provide a quantitative model for physical adversarial attacks. In this paper, we introduce a certifiable defense against patch attacks that guarantees for a given image and patch attack size, no patch adversarial examples exist. Our method is related to the broad class of \textit{randomized smoothing} robustness schemes which provide high-confidence probabilistic robustness certificates. By exploiting the fact that patch attacks are more constrained than general sparse attacks, we derive meaningfully large robustness certificates against them. Additionally, in contrast to smoothing-based defenses against $L_p$ and sparse attacks, our defense method against patch attacks is de-randomized, yielding improved, deterministic certificates. Compared to the existing patch certification method proposed by Chiang et al. (2020), which relies on interval bound propagation, our method can be trained significantly faster, achieves high clean and certified robust accuracy on CIFAR-10, and provides certificates at ImageNet scale. For example, for a $5\times5$ patch attack on CIFAR-10, our method achieves up to around $57.6\%$ certified accuracy (with a classifier with around $83.8\%$ clean accuracy), compared to at most $30.3\%$ certified accuracy for the existing method (with a classifier with around $47.8\%$ clean accuracy). Our results effectively establish a new state-of-the-art of certifiable defense against patch attacks on CIFAR-10 and ImageNet.

\end{abstract}

\section{Introduction}
In recent years, adversarial attacks have become a topic of great interest in machine learning \cite{szegedy2013intriguing,madry2017towards,carlini2017towards}. 
However, in many instances the threat models considered for these attacks (e.g. small $L_\infty$ distortions to every pixel of an image) implicitly require the attacker to be able to directly interfere with the input to a neural network. This limits practicality of such attacks as well as defenses against them. On the other hand, the development of \textit{physical} adversarial attacks \cite{eykholt2018robust}, in which small visible changes are made to real world objects in order to disrupt classification of images of these objects, represents a more concerning security threat. Unlike $L_p$ attacks, physical adversarial attacks can be perceptible (e.g. adding an adversarial sticker on a stop sign is a perceptible change). Nevertheless, humans would still correctly classify the attacked image while the classification model would fail to predict the correct label. Therefore, the attacked image is an adversarial example.  

Physical adversarial attacks can often be modeled as ``patch" adversarial attacks, in which the attacker can make arbitrary changes to pixels within a region of bounded size. Indeed, there is often a direct relationship between the two: for example, the universal patch attack proposed by \cite{brown2017adversarial} is an effective physical sticker attack. The attack method proposed in  \cite{brown2017adversarial} is universal in a sense that pixels of the adversarial patch do not depend on the attacked image. Image-specific patch attacks have also been proposed, such as LaVAN \cite{karmon2018lavan}, which reduces
ImageNet classification accuracy to 0\% using only a $42\times 42$ pixel square patch (on images of size $299 \times 299$). In this paper, we consider \textit{all} attacks (image-specific or universal) on square patches of size $m\times m$.

Practical defenses against patch attacks have been proposed.\cite{hayes2018visible,Naseer2018LocalGS} For the aforementioned $42\times 42$ pixel attacks on ImageNet, \cite{Naseer2018LocalGS} claims the current state-of-the-art practical defense. However, \cite{Chiang2020Certified} has recently broken this defense, reducing the classification accuracy on ImageNet to 14\%. In the same work, \cite{Chiang2020Certified} also proposes the first \textit{certified} defense against patch adversarial attacks, which uses interval bound propagation \cite{gowal2018effectiveness}. In a certifiably robust classification scheme, in addition to providing a classification for each image, the classifier may also return an assurance that the classification will provably not change under any distortion of a certain magnitude and threat model. One then reports both the \textit{clean accuracy} (normal accuracy) of the model, as well as the \textit{certified accuracy} (percent of images which are both correctly classified, and for which it is guaranteed that the classification will not change under a certain attack type). Unlike practical defenses, certified defenses guarantee that \textit{no} future adversary (under a certain threat model) will break the defense.

The certified defense proposed by \cite{Chiang2020Certified}, however, does not scale well to practical classification tasks on complex inputs such as CIFAR-10 or ImageNet samples. Specifically, while this certified defense performs well on MNIST, it achieves poor certified accuracy on CIFAR-10 and, to quote from the paper itself, ``is unlikely to scale to ImageNet.'' In this work, we propose a certified defense against patch attacks which overcomes these issues. In particular, our certifiable defense method leads to the following results:
\begin{table} [h!]
\centering
\begin{tabular}{|c|c|c|}
\hline
Dataset and Attack Size & Chiang et al. \cite{Chiang2020Certified}& Our method\\
&Certified Acc (Clean Acc)&Certified Acc (Clean Acc)\\
\hline
MNIST $5\times5$&\textbf{60.4\%} \textbf{(92.0\%)}& 52.44\% (96.54\%)\\
\hline
CIFAR $5\times5$&30.3\% (47.8\%) &\textbf{57.58\%}  \textbf{(83.82\%)}\\
\hline
ImageNet $42\times42$&N/A &\textbf{13.9\%}  \textbf{(44.6\%)}\\
\hline
\end{tabular}
\caption{Comparison of the certified accuracy of our defense vs. \cite{Chiang2020Certified}. For each technique, we report the certified and clean accuracies of the model with parameters giving the highest certified accuracy. }
\label{cert_compare_chiang}
\end{table}

Notably, our method achieves a more than $27$ percentage point increase in certified robustness on CIFAR-10 compared to  \cite{Chiang2020Certified}.
Moreover, our method has top-1 {\it certified} accuracy on ImageNet classification which is approximately equal to the 14\% {\it empirical} accuracy of the state-of-the art practical defense \cite{Naseer2018LocalGS} under the attack proposed by \cite{Chiang2020Certified} (although our clean accuracy is lower, 44\% vs. 71\%). On MNIST, which is often regarded as a toy dataset in deep learning applications, our method also achieves a relatively high certified robustness (but not as high as the method of \cite{Chiang2020Certified}) and clean accuracy (slightly higher than that of \cite{Chiang2020Certified}). Further, the certified defense proposed by \cite{Chiang2020Certified} also has a computationally expensive training algorithm: the training time for the reported best model was 8.4 GPU hours for MNIST, and 15.4 GPU hours for CIFAR-10, using NVIDIA 2080 Ti GPUs. Our models, by contrast, took approximately 1.0 GPU hour to train on MNIST, and 2.5 GPU hours to train on CIFAR-10, on the same model of GPU.

Our certifiably robust classification scheme is based on \textit{randomized smoothing}, a class of certifiably robust classifiers which have been proposed for various threat models, including $L_2$ \cite{li2018second, cohen2019certified, salman2019provably}, $L_1$ \cite{lecuyer2018certified} and $L_0$ \cite{lee2019tight,levine2019robustness} and Wasserstein \cite{levine2019wasserstein} metrics. All of these methods rely on a similar mechanism where noisy versions of an input image $\bx$ are used in the classification. Such noisy inputs are created either by adding random noise to all pixels \cite{lecuyer2018certified} or by removing (\textit{ablating}) some of the pixels \cite{levine2019robustness}. A large number of noisy images are then classified by a \textit{base classifier} and then the consensus of these classifications is reported as the final classification result. For an adversarial image $\bx'$ at a bounded distance from $\bx$, the probability distributions of possible noisy images which can be produced from $\bx$ and $\bx'$ will substantially overlap. This implies that, if a sufficiently large fraction of noisy images derived from $\bx$ are classified to some class $c$, then with high confidence, a plurality of noisy images derived from $\bx'$ will also be assigned to this class.

Patch adversarial attacks can be considered a special case of $L_0$ (sparse) adversarial attacks: in an $L_0$ attack, the adversary can choose a limited number of pixels and apply unbounded distortions to them. A patch adversarial attack is therefore a sparse adversarial attack where the attacker is additionally constrained to selecting only a block of adjacent pixels to attack, rather than any arbitrary pixels. The current state-of-the-art certified defense against sparse adversarial attacks is a randomized smoothing method proposed by \cite{levine2019robustness}. In this method, a base classifier, $f(\bx)$, is trained to make classifications based on only a small number of independently randomly-selected pixels: the rest of the image is \textit{ablated}, meaning that it is encoded as a null value. At test time, the final classification $g(\bx)$ is taken as the class most likely to be returned by $f$ on a randomly ablated version of the image. 
\begin{figure}
    \centering
    \includegraphics[width=.8\textwidth]{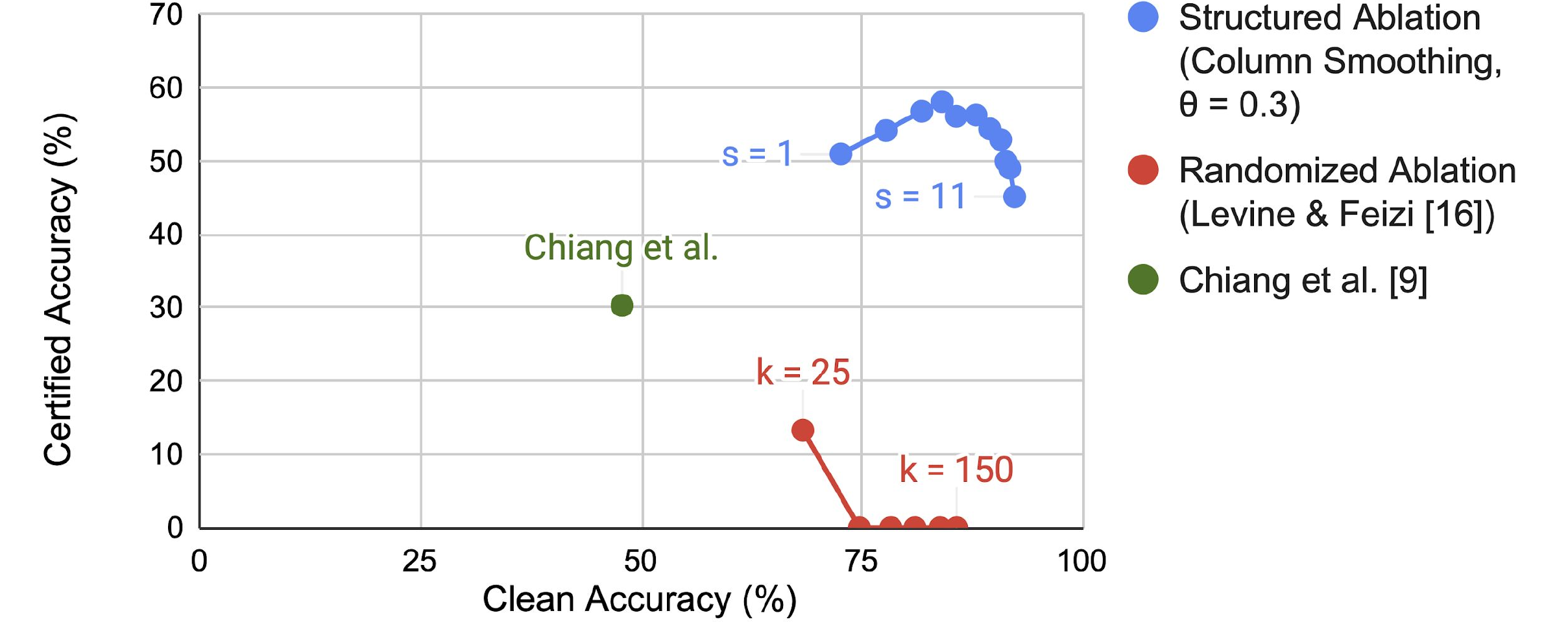}
    \caption{Clean and Certified accuracies for $5 \times 5$ adversarial patches on CIFAR-10. We compare our proposed method, Structured Ablation (for a range of its hyperparameter $s$) with the certified defense for patch attacks proposed by \cite{Chiang2020Certified}, and with a naive application of the $L_0$ defense proposed by \cite{levine2019robustness} (for a range of that technique's hyperparameter $k$). Our defense achieves significantly higher certified and clean accuracies compared to the other methods.  }
    \label{fig:methodcomparechart}
\end{figure}
In practice, we find that applying the defense method developed in \cite{levine2019robustness} for sparse attacks directly to patch attacks yields poor results (see Figure \ref{fig:methodcomparechart}). This is because the defense proposed in \cite{levine2019robustness} does not incorporate the additional structure of the attack. For patch attacks, we can use the fact that the attacked pixels form a contiguous square to develop a more effective defense. In this paper, we propose a \textit{structured ablation} scheme, where instead of independently selecting pixels to use for classification, we select pixels in a correlated way in order to reduce the probability that the adversarial patch is sampled.
Empirically, structured ablation certificates yields much improved certified accuracy to patch attacks, compared to the naive $L_0$ certificate.

By reducing the total number of possible ablations of an image, structured ablation allows us to de-randomize our algorithm, yielding improved, deterministic certificates. For $L_0$ robustness, \cite{levine2019robustness} achieves the largest median certificates on MNIST by using a base classifier $f$ which classifies using only $45$ out of $784$ pixels. There are $\binom{784}{45} \approx 4 \times 10^{73}$ ways to make this selection. It is therefore not feasible to evaluate precisely the probability that $f(\bx)$ returns any particular class $c$: one must estimate this based on random samples. Using our proposed methods, the number of possible ablations is small enough so that it is tractable to classify using all possible ablations: we can \textit{exactly} evaluate the probability that $f(\bx)$ returns each class. Our certificate is therefore exact, rather than probabilistic, so our classifications are provably robust in an absolute sense.

Determinism provides another benefit: the absence of estimation error increases the certified accuracies that can be reported. Additionally, because estimation error is no longer a concern, derandomization allows us to use more rich information from the base classifier without incurring an additional cost in increased estimation error. We take advantage of this to allow the base classifier to abstain in cases where it cannot make a high-confidence prediction towards any class. This leads to substantially increased certificates on MNIST, although the effects on CIFAR-10 are not significant.

After the initial distribution of this work, \cite{Xiang2020PatchGuardPD} improved upon it by proposing a tighter certificate. In a concurrent work to ours, \cite{Zhang2020ClippedBD} also proposes a method similar to ``block smoothing'' proposed below.
\section{Certifiable Defenses against Patch Attacks}
\subsection{Baseline: Sparse Randomized Ablation \cite{levine2019robustness}} \label{sec:sparsereview}
As mentioned in the introduction, patch attacks can be regarded as a restricted case of $L_0$ attacks. In particular, let $\rho$ be the magnitude of an $L_0$ adversarial attack: the attacker modifies $\rho$ pixels and leaves the rest unchanged. A patch attack, with an $m \times m$ adversarial patch, is also an $L_0$ attack, with $\rho = m^2$. We can then attempt to apply existing certifiably robust classification schemes for the $L_0$ threat model to the patch attack threat model: we simply need to certify to an $L_0$ radius of $\rho = m^2$. Consider specifically the $L_0$ smoothing-based certifiably robust classifier introduced by \cite{levine2019robustness}. In this classification scheme, given an input image $\bx$, the base classifier $f$ classifies a large number of distinct randomly-ablated versions of $\bx$, in each of which only $k$ pixels of the original image are randomly and independently selected to be retained and used by the base classifier $f$. Therefore, for \textit{any choice} of $\rho$ pixels that the attacker could choose to attack, the probability that any of these $\rho$ pixels is also one of the $k$ pixels used in $f$'s classification is:
    \begin{align*}
        \Delta &:= \Pr(f\text{ uses attacked pixels})\\
         &= 1-\frac{\binom{hw-\rho}{k}}{\binom{hw}{k}} \approx k \frac{\rho}{hw} = \frac{km^2}{hw} \,\,\,\,\,\,\, (k,\rho << hw),
    \end{align*}
where $\rho$ is the number of attacked pixels, $k$ is the number of retained pixels used by the base  classifier, and the overall dimensions of the input image $\bx$ are $h\times w$. To understand this, note that the classifier has $k$ opportunities to choose an attacked pixel, and $\rho$ out of $hw$ pixels are attacked. Clearly, if $f$ does not use any of the attacked pixels, then its output will not be corrupted by the attacker. Therefore, the attacker can change the output of $f(\bx)$ with probability at most $\Delta$. Let $c$ be the majority classification at $\bx$ (i.,e., $g(\bx) = c$). If $f(\bx) = c$ with probability greater than $0.5+\Delta$, then for any distorted image $\bx'$, one can conclude that $f(\bx') =c$ with probability greater than $0.5$, and therefore that $g(\bx') = c$. As discussed in the introduction, while this technique produces state-of-the-art guarantees against general $L_0$ attacks, it yields rather poor certified accuracies when applied to patch attacks, because it does not take advantage of the structure of the attack (See Figure \ref{fig:methodcomparechart}; data for MNIST are provided in supplementary material.).
\subsection{Proposed Method: Structured Ablation} \label{sec:structablate}
To exploit the restricted nature of patch attacks, we propose two \textit{structured ablation} methods, which select correlated groups of pixels  to reduce the probability $\Delta$ that the adversarial patch is sampled:
\begin{itemize}[leftmargin=*]
    \item \textbf{Block Smoothing}: In this method, we select a single $s \times s$ square block of pixels, and ablate the rest of the image. The number of retained pixels is then $k =s^2$. Note that for an $m \times m$ adversarial patch, out of the $h \times w$ possible selections for blocks to use for classification, $(m+s-1)^2$ of them will intersect the patch. Thus, we have:
    \begin{equation} \label{eq1}
  \Delta_{\text{block}} = \frac{(m+s-1)^2}{h w} = \frac{(m+\sqrt{k}-1)^2}{h w} <\frac{4 \max(m^2, k)}{hw}.
    \end{equation}
    As illustrated in Figure \ref{fig:block_compare}, this implies a substantially decreased probability of intersecting the adversarial patch, compared to sampling $k$ pixels independently.
    \item \textbf{Band Smoothing}: In this method, we select a single band (a \textbf{column} or a \textbf{row}) of pixels of width $s$, and ablate the rest of the image. In the case of a column, the number of retained pixels is then $k =sh$. For an $m \times m$ adversarial patch, out of the $w$ possible selections for bands to use for classification, $m+s-1$ of them will intersect the patch. Then we have:
    \begin{equation} \label{eq2}
  \Delta_{\text{col.}} = \frac{m+s-1}{w} = \frac{m+k/h-1}{w} 
      <\frac{2\max(hm, k)}{hw}.
    \end{equation}
\end{itemize}
\begin{figure}[]
    \centering
    \includegraphics[width=.5\textwidth]{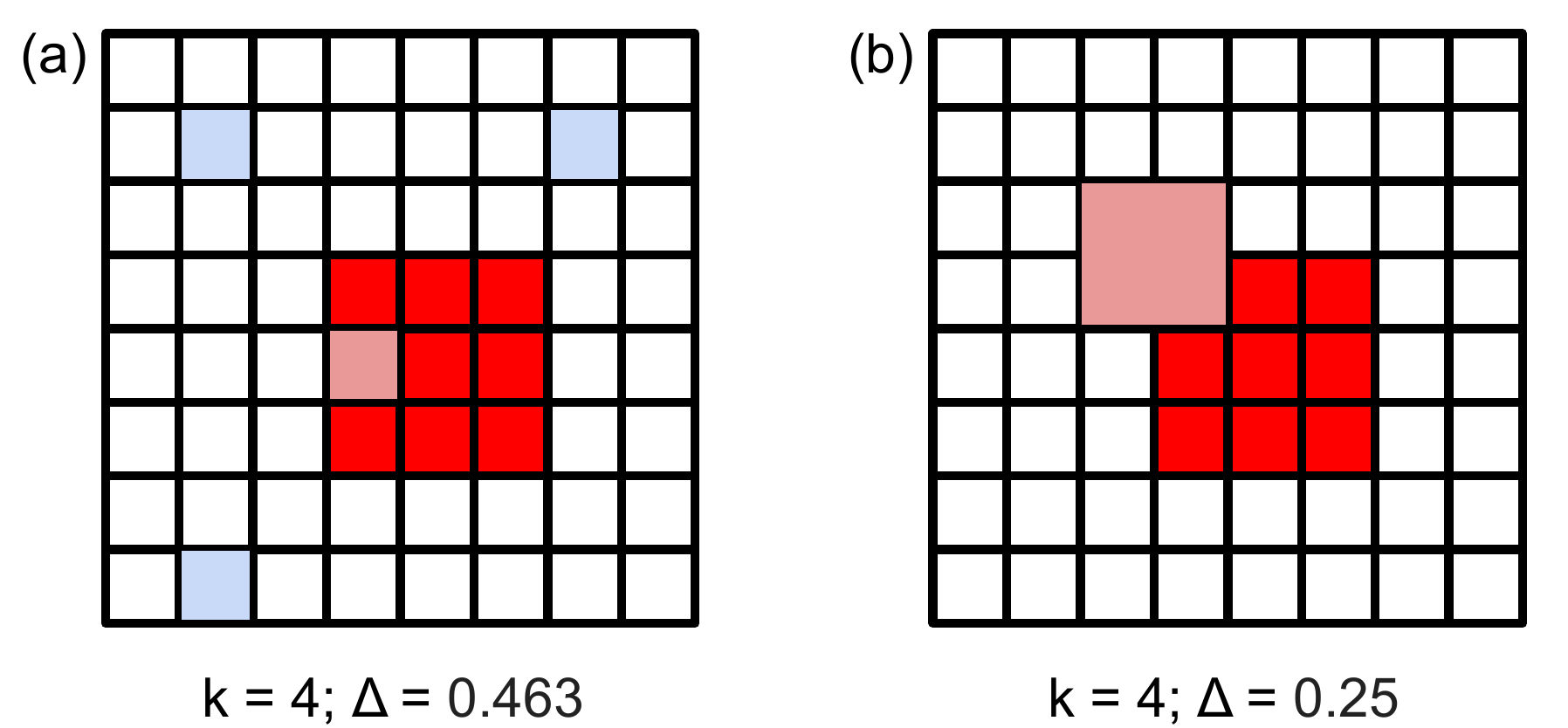}
    \caption{Likelihood of selecting a pixel which is part of the attacked patch (red) for (a) sparse randomized ablation, as proposed by \cite{levine2019robustness} (b) Structured ablation, using a block of size $s=2$.
    In both cases, $k=4$ pixels are retained. However, in the sparse case, if \textit{any} of the four independently-selected pixels sample the patch, then the classification may be impacted:  this occurs with probability $\Delta = 1- {\binom{64-9}{4}}/{\binom{64}{4}} \approx 0.463$. In contrast, the probability that the block overlaps with the adversarial patch is only $\frac{16}{64} = 0.25$.}
    \label{fig:block_compare}
\end{figure}
For both of these methods, it is tractable to use the base classifier to classify all possible ablated versions of an image (i.e. $hw$ and $w$ possible ablations for block and column smoothing, respectively). This allows us to exactly compute the smoothed classifier, $g(\bx)$, yielding deterministic certificates.

Our experiments show that structured ablation produces higher certified accuracy than $L_0$ randomized ablation. This is because, for similar values of $\Delta$, structured ablation methods yield much higher base classifier accuracies (Figure \ref{fig:defences_compare}). Empirically, we find that the band method (and specifically, column smoothing) produces the most certifiably robust classifiers (Figure \ref{fig:cifardata}). In supplementary materials, we explore structured ablation using multiple blocks or bands of pixels.
\begin{figure} []
    \centering
    \includegraphics[width=.75\textwidth]{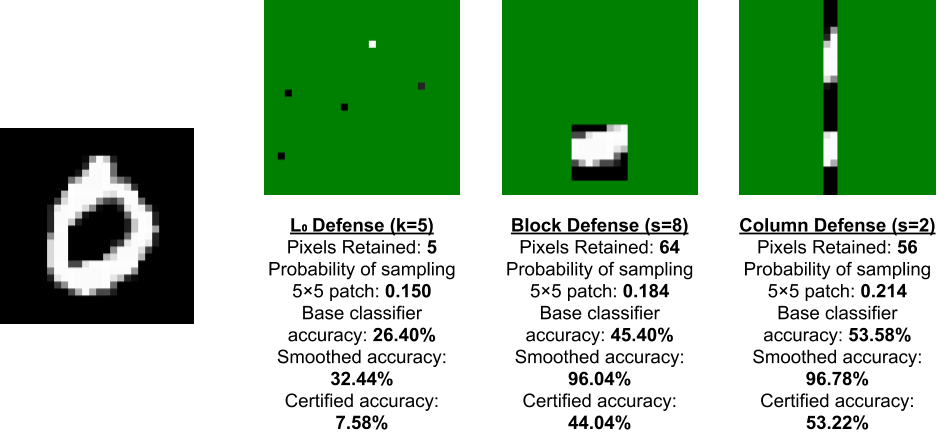}
    \caption{Comparison of the $L_0$ defense proposed by \cite{levine2019robustness} to our proposed defenses on MNIST, for a $5\times5$ patch attack. While sampling a single block or band slightly increases the probability $\Delta$ that an adversarially distorted pixel is used, the large increase in the total number of retained pixels, and therefore the base classifier accuracy, more than makes up for this increase in $\Delta$. However, the number of retained pixels alone does not perfectly correspond to higher base classifier accuracy: while the band method uses slightly fewer pixels than the block method, the base classifier has substantially higher accuracy, leading to higher certified accuracy. }
    \label{fig:defences_compare}
\end{figure}

We now explicitly describe our algorithms, starting with block smoothing. For an input image $\bx$, let the base classifier be specified as $f_c(\bx,s,x,y)$, where $\bx$ is the input image, $s$ is the block size, $(x,y)$ is the position of the retained block, and $c \in \sN$ is a class label. In other words, $f(\bx,s,x,y)$ is the base classification, where the classifier uses only the pixels in an $s \times s$ block with upper-left corner $(x,y)$ (If the retained block would exceed the borders of the image, it wraps around: see Figure \ref{fig:ablate_indexing}). For each class $c$, $f_c(\bx,s,x,y)$ will either be $0$ or $1$; however, note that we \textit{do not} require that $f_c(\bx,s,x,y) = 1$ for any class $c$ (it may abstain, returning zero for all classes), and we also allow for $f_c(\bx,s,x,y)$ to equal $1$ for multiple classes (see Section \ref{sec:imp-details} for details). 
To make our final classification and compute our robustness certificate, we count the number of blocks on which the base classifier returns each class:
\begin{equation}
    \forall c,\,\,\, n_c(\bx) := \sum ^{w}_{x = 1} \sum ^{h}_{y = 1}  f_c(\bx,s,x,y)
\end{equation}
The final smoothed classification is simply the plurality class returned: $ g(\bx) := \arg \max_c n_c(\bx).$ In  the case of ties, we deterministically return the smaller-indexed class. Because the adversarial patch only intersects $(m+s-1)^2$ blocks, the adversary can only alter the output of $(m+s-1)^2$ of the evaluations of the base classifier. This yields the following guarantee:
\begin{proposition} \label{thm:block_smooth}
For any image $\bx$, base classifier $f$, smoothing block size $s$, and patch size $m$, if:
\begin{equation}
    n_c(\bx) \geq \max_{c' \neq c} \left[ n_{c'}(\bx) + \vone_{c > c'}  \right] + 2(m+s-1)^2 
\end{equation}
then for any image $\bx'$ which differs from $\bx$ only in an $(m \times m)$ patch, $g(\bx') = c$.
\end{proposition}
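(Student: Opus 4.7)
The plan is to use a straightforward counting argument: track how much each count $n_{c'}$ can move under the patch perturbation, then apply the hypothesis class by class, handling the tie-breaking rule with care.

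First, I would argue that only $(m+s-1)^2$ blocks can possibly have their base-classifier output altered. An $s \times s$ block (with wrap-around) intersects the attacked $m \times m$ patch if and only if its upper-left corner lies in an $(m+s-1) \times (m+s-1)$ window (up to the toroidal indexing). For every block $(x,y)$ that does not meet the patch, the retained pixels used by $f$ are identical in $\bx$ and $\bx'$, so $f_c(\bx,s,x,y) = f_c(\bx',s,x,y)$ for every class $c$. Writing $\Delta := (m+s-1)^2$, this immediately gives the key Lipschitz-type estimate: for every class $c'$,
\begin{equation*}
|n_{c'}(\bx') - n_{c'}(\bx)| \le \Delta,
\end{equation*}
since at most $\Delta$ summands in the definition of $n_{c'}$ can change, each by at most $1$.

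Next, I would combine this with the hypothesis. For any $c' \neq c$, the hypothesis gives $n_c(\bx) \ge n_{c'}(\bx) + \vone_{c > c'} + 2\Delta$. Applying the bounds $n_c(\bx') \ge n_c(\bx) - \Delta$ and $n_{c'}(\bx') \le n_{c'}(\bx) + \Delta$, I get
\begin{equation*}
n_c(\bx') \ge n_{c'}(\bx') + \vone_{c > c'}.
\end{equation*}
I would then split into two cases according to the tie-breaking rule: if $c' < c$ (so $\vone_{c>c'}=1$), this gives a strict inequality $n_c(\bx') > n_{c'}(\bx')$, so $c$ strictly beats $c'$; if $c' > c$ (so $\vone_{c>c'}=0$), this gives $n_c(\bx') \ge n_{c'}(\bx')$, and the tie-breaking rule (smallest index wins) still favors $c$.

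Finally, since $c$ either strictly dominates or ties-with-priority every other class under $\bx'$, the definition $g(\bx') := \arg\max_c n_c(\bx')$ with the stated tie-breaking produces $g(\bx') = c$, completing the proof. The only genuinely delicate point is the tie-breaking: the $\vone_{c > c'}$ indicator in the hypothesis is calibrated exactly so that the shift of $\pm\Delta$ in each direction yields a strict inequality precisely in the direction where one is needed (against smaller-indexed competitors) and only a non-strict inequality where ties resolve in $c$'s favor. Beyond that, everything is a book-keeping exercise on the counts.
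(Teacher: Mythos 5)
Your proof is correct and follows essentially the same route as the paper's: count the $(m+s-1)^2$ blocks that can intersect the patch (accounting for wrap-around), deduce $|n_{c'}(\bx)-n_{c'}(\bx')|\le (m+s-1)^2$ for every class, and then do the same two-case analysis on the tie-breaking indicator $\vone_{c>c'}$. No substantive differences or gaps.
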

In Theorem \ref{thm:block_smooth}, the indicator function term ($\vone_{c>c'}$) is present because we break ties deterministically by label index during the final classification. Proofs are provided in supplementary materials. 

Note that the classifier counts $n_c(\bx)$ can be though of as exact estimates for the probability that the base classifier returns the class $c$, simply scaled up by a factor of $hw$. For the column smoothing case (or row smoothing, by simple transpose), we can compute a similar certificate. In this case, the base classifier is $f_c(\bx,s,x)$, where $s$ is now the width of the retained column of pixels, and $x$ is the position of the leftmost edge of this column. We then only need to sum over one dimension:
\begin{equation}
    \forall c,\,\,\, n_c(\bx) := \sum ^{w}_{x = 1}  f_c(\bx,s,x).
\end{equation}
Again, we classify using  $ g(\bx) := \arg \max_c n_c(\bx).$ To derive the final guarantee, we now use that the adversarial patch will overlap with only $(m+s-1)$ columns:
\begin{proposition} \label{thm:band_smooth}
For any image $\bx$, base classifier $f$, smoothing column size $s$, and patch size $m$, if:
\begin{equation}
    n_c(\bx) \geq \max_{c' \neq c} \left[ n_{c'}(\bx) + \vone_{c > c'}  \right] + 2(m+s-1)
\end{equation}
then for any image $\bx'$ which differs from $\bx$ only in an $(m \times m)$ patch, $g(\bx') = c$.
\end{proposition}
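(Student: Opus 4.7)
The plan is to mirror the argument for Theorem \ref{thm:block_smooth}, but with the one-dimensional sweep of columns in place of the two-dimensional sweep of blocks. The structural fact I would exploit is that the base classifier $f_c(\bx,s,x)$ depends only on the pixels inside the column of width $s$ starting at position $x$ (with wrap-around), so whenever that column is disjoint from the adversarial patch we have $f_c(\bx,s,x) = f_c(\bx',s,x)$ for every class $c$. Therefore the only $x$-positions at which $f_c$ can differ between $\bx$ and $\bx'$ are those whose retained column intersects the $m\times m$ patch.

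The first step is to bound the number of such ``bad'' columns. A column indexed by its leftmost pixel at position $x$ covers the pixel columns $\{x, x+1, \dots, x+s-1\}$ (mod $w$), and the adversarial patch covers some $m$ consecutive pixel columns $\{x_0, \dots, x_0+m-1\}$. Intersection happens exactly when $x_0 - s + 1 \leq x \leq x_0 + m - 1$, i.e., for exactly $m + s - 1$ values of $x$. I would spell out the (trivial) wrap-around case so that this count is an upper bound in all situations.

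The second step is to propagate this to the counts $n_c$. Since $f_c \in \{0,1\}$, for each class $c$ we get $|n_c(\bx) - n_c(\bx')| \leq m+s-1$. Hence for any $c' \neq c$,
\begin{equation*}
n_c(\bx') - n_{c'}(\bx') \;\geq\; n_c(\bx) - n_{c'}(\bx) - 2(m+s-1) \;\geq\; \vone_{c > c'},
\end{equation*}
where the last inequality uses the hypothesis of the theorem. This yields $n_c(\bx') > n_{c'}(\bx')$ whenever $c > c'$, and $n_c(\bx') \geq n_{c'}(\bx')$ whenever $c < c'$; combined with the deterministic tie-breaking rule that favors the smaller class index, this gives $g(\bx') = c$ for every competitor $c'$, hence $g(\bx') = c$.

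I expect the only delicate step to be the column-intersection count: being careful that $m+s-1$ is indeed an upper bound in all situations (image boundary versus wrap-around, and the fact that columns are indexed modulo $w$). Once that combinatorial bound is pinned down, the rest is the same book-keeping argument that handles the tie-breaking indicator $\vone_{c > c'}$ exactly as in Theorem \ref{thm:block_smooth}, just with the factor $(m+s-1)^2$ replaced by $(m+s-1)$.
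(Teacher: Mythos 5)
Your proposal is correct and follows essentially the same route as the paper's proof: bound the number of retained columns intersecting the patch by $m+s-1$ (with the wrap-around check), conclude $|n_c(\bx)-n_c(\bx')|\leq m+s-1$ for every class, and then handle the tie-breaking indicator $\vone_{c>c'}$ by the same case split on whether $c$ or $c'$ has the smaller index. The single-chain inequality you write is just a compressed form of the paper's two-case argument, so there is nothing substantively different.
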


\subsubsection{Implementation Details}\label{sec:imp-details}
In practice, we use a deep network as our base classifier, and set $f_c(\bx,s,x,y) =1$ if the logit corresponding to class $c$ is greater than a threshold hyperparameter $\theta$. This allows the base classifier to abstain from classifying in the case that there is no usable information in the retained block, as well as to ``vote'' for multiple classes, which may be beneficial if the base classifier top-1 accuracy is low.

The input of to the neural network used as the base classifier is a copy of the image $\bx$, with all pixels except for those in the retained block or band replaced with a specially-encoded `NULL' value. We encode the additional `NULL' value in the input in the same manner described for randomized ablation by  \cite{levine2019robustness} for each dataset tested: this involves adding additional color channels, so that the  NULL value is distinct from all real pixel colors. During training, as in prior smoothing works, we train $f$ on ablated samples, using a single randomly-determined ablation pattern (selection of block or column to retain) on all samples in each batch.

\begin{figure}
    \centering
    \includegraphics[width=\textwidth]{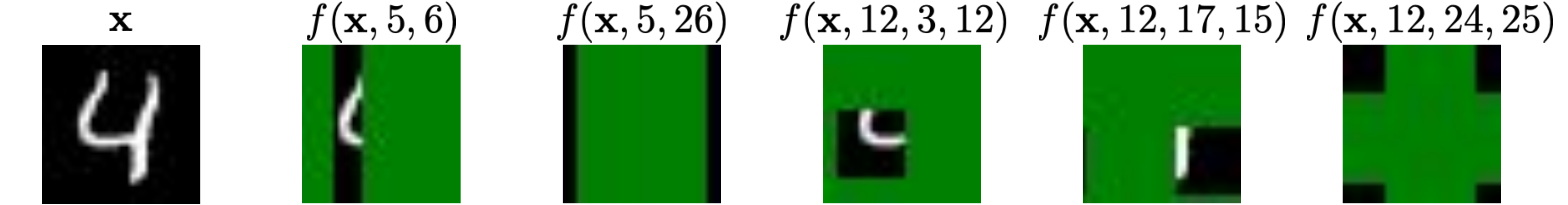}
    \caption{Representation of which pixels are used by the base classifier $f$, as a function of indexing. Ablated pixels are represented in green.  }
    \label{fig:ablate_indexing}
\end{figure}

\subsection{Comparison to Conventional Randomized Smoothing}
\label{compare_rand_theor}
In conventional randomized smoothing, rather than computing the probability that $f$ returns each class directly, one must instead lower-bound, with high confidence, the probability $p_c$ that $f$ returns the plurality class $c$ and upper-bound the probabilities $p_{c'}$ that $f$ returns all other classes, based on samples.
This leads to decreased certified accuracy due to estimation error. Additionally, all of these bounds must hold simultaneously: in order to ensure that the gap between $p_c$ and $p_{c'}$ is sufficiently large for each $c'$ to prove robustness, one must bound the population probabilities for \textit{every} class. Some works \cite{lecuyer2018certified, feng2020regularized} do this directly using a union bound, leading to increased error as the number of classes increases. Others, following \cite{cohen2019certified}, instead only use samples to lower-bound the probability $p_c$ that the base classifier returns the top class. One can then upper bound all other class probabilities by observing that $\forall c', \,\, p_{c'} \leq 1-p_c$. In other words, rather than determining whether $c$ will stay the plurality class at an adversarial point, one instead determines whether $c$ will stay the \textit{majority} class.
This is also the estimation method used by \cite{levine2019robustness} for $L_0$ certificates: this is why, when describing that method in Section \ref{sec:sparsereview}, we gave the condition for certification as $p_c > 0.5+\Delta$. In our deterministic method, we can use a less strict condition, that $\forall c', \,\, p_c - p_{c'} > 2\Delta$, where $p_c = n_c/hw$ for block smoothing, and  $p_c = n_c/w$ for column smoothing. (As described above, we can sometimes even certify in the \textit{equality} case, when it is assured that $c$ will be selected if there is a tie between the class probabilities at the distorted point.)

In this work, we sidestep the estimation problem entirely by computing the population probabilities exactly. This substantially reduces evaluation time: for example, column smoothing on CIFAR-10 requires 32 forward passes, compared to $10^4-10^5$ for randomized ablation \cite{levine2019robustness}. (We provide measured evaluation times in supplementary material.) However, by avoiding the assumption of \cite{cohen2019certified}, that all probability not assigned to $c$ is instead assigned to a single adversarial class, we can make an additional optimization: we can add an `abstain' option. If there is no compelling evidence for any particular class in an ablated image (i.e., if all logits are below a threshold value $\theta$), our classifier abstains. This prevents blocks which contain no information from being assigned to an arbitrary, likely incorrect class. Figure \ref{fig:cifardata}-a shows that this significantly increases the certified accuracy on MNIST, although it has little effect on CIFAR-10. Our threshold system also allows the base classifier to select \textit{multiple} classes, if there is strong evidence for each of them. This is intended to increase certified accuracy in the case of a large number of classes, where the top-1 accuracy of the base classifier might be low: if the correct class consistently occurs within the top several classes, it may still be possible to certify robustness.

In a concurrent work, \cite{rosenfeld2020certified} also proposes a derandomization of a randomized smoothing technique. However, the threat model considered is quite different: \cite{rosenfeld2020certified} develops a defense against label-flipping poisoning attacks, where the adversary changes the labels of training samples. Notably, \cite{rosenfeld2020certified}`s result only applies directly to linear base classifiers. By making this restriction, \cite{rosenfeld2020certified} is able to analytically determine the probabilities of $f(\bx)$ returning each class. By contrast, our de-randomized technique for patch attacks does not restrict the architecture of the base classifier $f$, in practice a deep network. 
\begin{figure} [h]
    \centering
    \includegraphics[width=\textwidth]{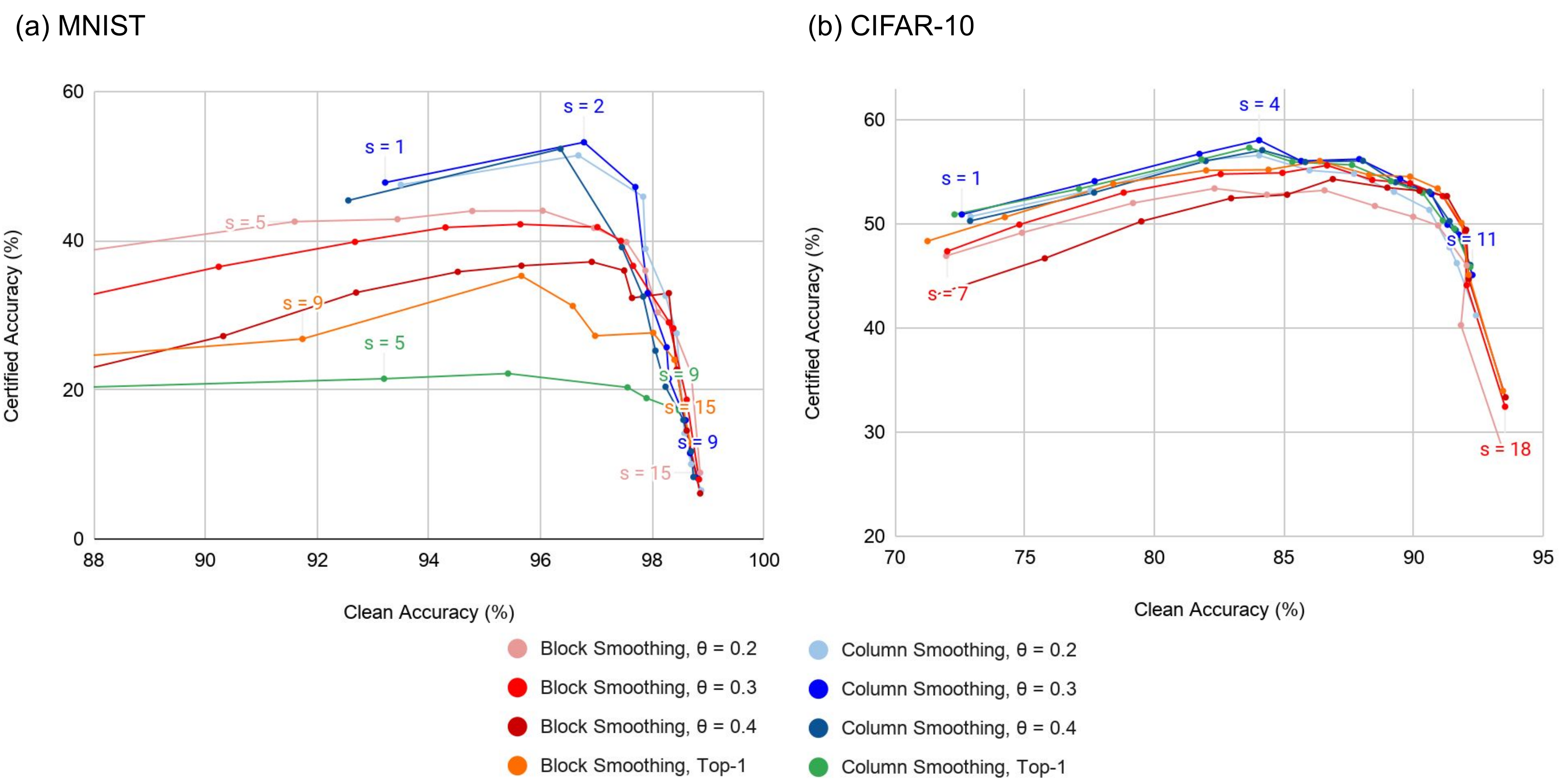}
    \caption{Validation set certificates for $5 \times 5$ patches on (a) MNIST,  (b) CIFAR-10. Best certified accuracy is achieved using Column Smoothing for both datasets, with $s=2, \theta = 0.3$ for MNIST and $s=4, \theta = 0.3$ for CIFAR-10. Column smoothing (blue lines) gives better certified accuracies than block smoothing (red lines), but the effect is small on CIFAR-10.}    \label{fig:cifardata}
\end{figure}
\section{Results}
Certified robustness against patch attacks is presented for $5\times 5$ patches on MNIST and CIFAR-10 in Figure \ref{fig:cifardata}, using both block and column smoothing (On MNIST, we also tested smoothing with rows rather than columns, with slightly worse results: see supplementary materials.) Results in the figures are using a validation set of 5,000 images; the final results reported in Table \ref{cert_compare_chiang} are on a separate test set of 5,000 images. On both datasets, we have found that column smoothing produces better certified accuracies than block smoothing. However, the performance gap is larger on MNIST than on CIFAR-10. We have also tested with the base classifier returning only the top-one class, rather than thresholding the logits to abstain on low confidence predictions. We find that thresholding produces a large improvement on MNIST, but has had little effect on CIFAR-10. This is possibly because MNIST images, when ablated, will often have zero information (i.e., be entirely black), while in natural images, the retained region will always have some information. In both datasets, we found that the column smoothing certificates are not highly sensitive to the threshold hyperparameter $\theta$. 

Experiments using multiple blocks and columns, rather than just a single block or column for each base classification, are presented in supplementary materials.

In Figure \ref{fig:patchsizescaling}, we show how our certificates scale to different patch sizes, beyond the standard $5 \times 5$. On CIFAR-10, we maintain high certified accuracy even at a patch size of $9\times 9$. Notably, the optimal column width $s$ seems not to depend on the patch size, suggesting that a single trained model can defend against a broad class of patch attacks.

On ImageNet-1000 (ILSVRC2012), we have tested certified robustness to $42 \times 42$ patch attacks with column smoothing alone, using column width $s = 25$, and over the $\theta$ hyperparameter range $\theta = \{0.1, 0.2, 0.3, 0.4\}$. We have used 1,000 images for validation, and 1,000 for test, using the optimal $\theta = 0.2$; test set results are presented in Table \ref{cert_compare_chiang}. Full validation results for all datasets are presented as tables in supplementary materials. 
\begin{figure} [t]
    \centering
    \includegraphics[width=\textwidth]{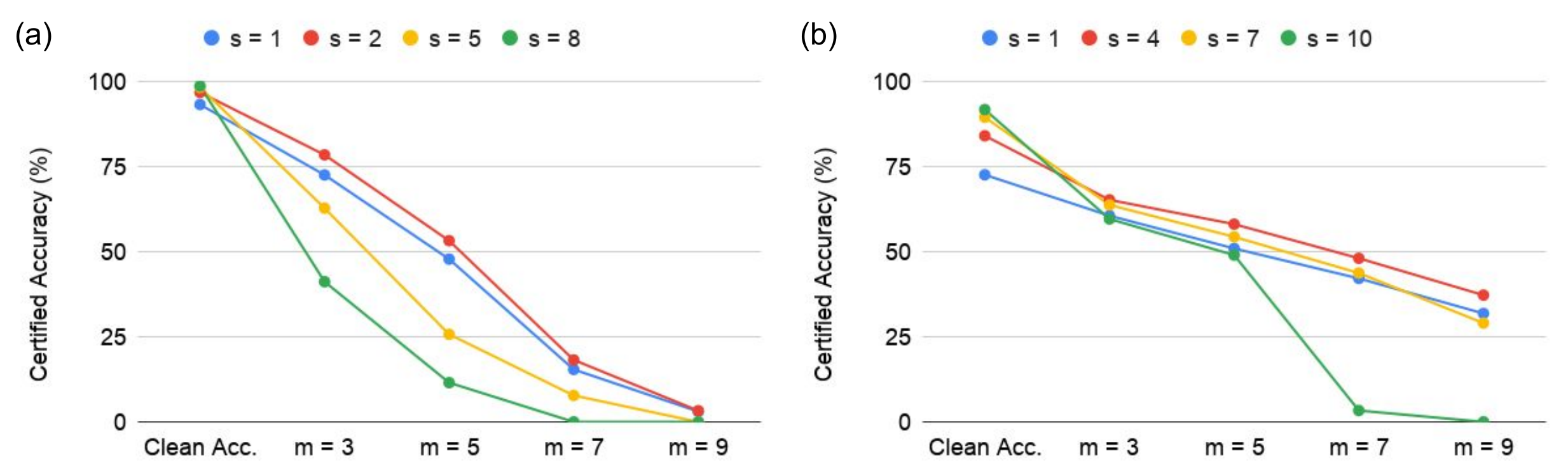}
    \caption{Validation set certificates for $m \times m$ patches on (a) MNIST, (b) CIFAR-10. For all experiments, we use Column Smoothing, $\theta= 0.3$. On CIFAR-10, we maintain high certified accuracy even at $m=9$. The optimal column width $s$ seems not to depend on the patch size, suggesting that a single trained model will defend against a broad class of patch attacks. }    \label{fig:patchsizescaling}
\end{figure}

We also compare column smoothing certificates for MNIST and CIFAR-10 to \textit{randomized} column smoothing smoothing certificates on both datasets: see Table \ref{tab:eff_derandomize}. We find that the ``derandomization" improves the certificates independently of the effect of thresholding (for example, it increases the certified accuracy on CIFAR-10 by nearly 7 percentage points.) 

\begin{table}[h]
    \centering
\begin{tabular}{|c|c|c|c|}
\hline
Dataset & Derandomized&  Derandomized& Randomized \\
 &  $\theta= .3$ &   Top-1 class&  Column Smoothing \\
\hline
MNIST&53.22\% (s = 2)&22.20\%  (s = 6)&16.32\% (s = 6)\\
\hline
CIFAR-10 &58.08\% (s = 4)&57.36\%  (s = 4)&50.38\% (s = 6)\\
\hline
\end{tabular} 
    \caption{Comparison of Certified Accuracies for derandomized  versus randomized structured ablation (column smoothing) for $5\times 5$ adversarial patches for MNIST and CIFAR-10. We compare randomized structured ablation to both the ``Top-1 class'' method (without abstaining or thesholding) as well as to the thresholding method, with the optimal $\theta = 0.3$. Here, we show the certified accuracy for the optimal value of the hyperparameter $s$ for each method: results for all $s$ are presented in supplementary materials. } \label{tab:eff_derandomize}
\end{table}

\subsection{Empirical Robustness} \label{sec:emprobust}
We evaluated the empirical robustness of our method, specifically column smoothing, on CIFAR-10, using a modified version of the IFGSM patch attack from \cite{Chiang2020Certified}. In particular, because the zero-one base-classifications $f_c$ are non-differentiable, we cannot attack $n_c(\textbf{x})$ directly. Instead, in order to generate the attacks, we use a surrogate model in which $f_c$ returns SoftMax scores. Note that this is similar to \cite{salman2019provably}'s attack on Gaussian-smoothed classifiers, but there is no need to consider random sampling in this case. Further details on the attack are provided in supplementary materials. Results are presented in Figure \ref{fig:empirical_results}-a. We note that, as expected, our certified lower bounds hold, and furthermore that our model is significantly more robust to patch adversarial attacks compared with an undefended baseline model. We also evaluated the robustness of our attack to a non-patch adversarial attack, specifically an $L_\infty$-bounded IFGSM attack. Because all base classifiers are attacked simultaneously in this model, our method provides no robustness guarantee, and one might worry that the model could be particularly vulnerable to the attack. However, while the accuracy under this attack was reduced compared to an undefended baseline model, this was not a dramatic effect: see Figure \ref{fig:empirical_results}-b.
\begin{figure}
    \centering
    \includegraphics[width=\textwidth]{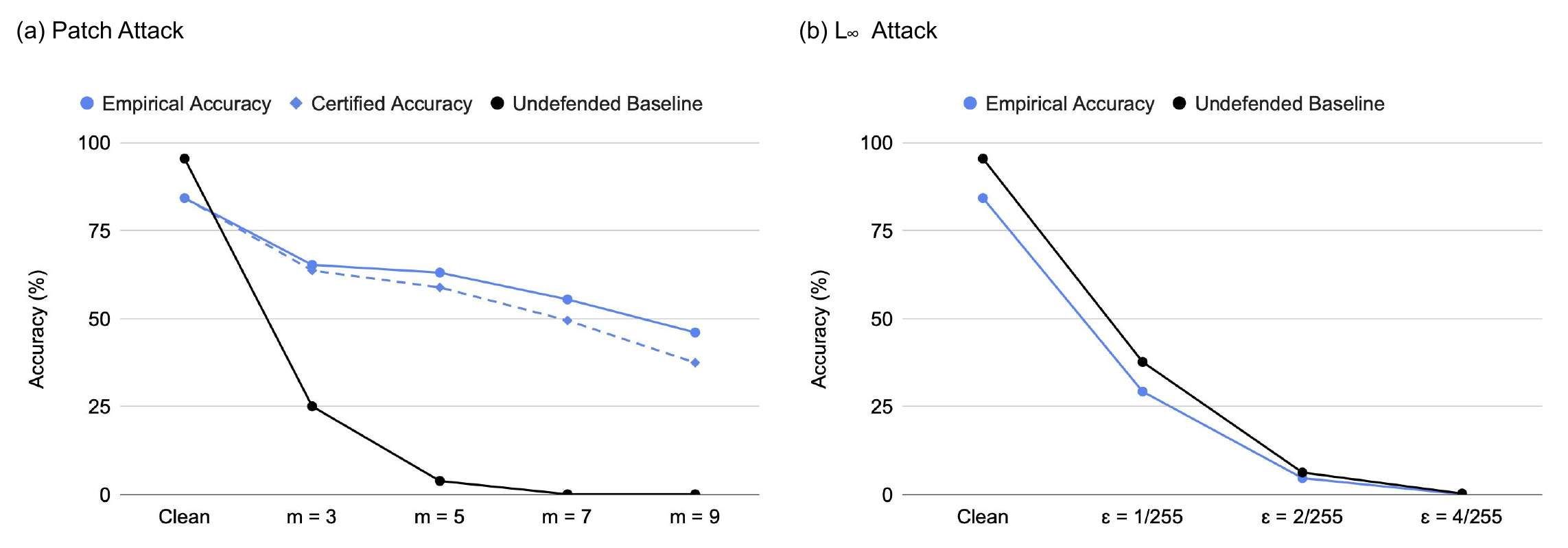}
    \caption{Empirical attacks against column smoothing on 500 images from the CIFAR-10 test set, versus an unprotected baseline model. We use optimal hyperparameters ($s =4, \theta = 0.3$) for column smoothing. For the $L_\infty$ attack, we used IFGSM for 50 iterations and a step size of $0.5/255$. For the patch attack, we use the patch-IFGSM attack from \cite{Chiang2020Certified}, with 80 random starts, 150 iterations per random start, and step size of 0.05. }
    \label{fig:empirical_results}
\end{figure}

\section{Conclusion}
Patch adversarial attacks are important threat models because they formalize physical adversarial attacks. In this work, we propose Structured Ablation, a provably robust defense against patch attacks. Our method, an adaptation of randomized smoothing, significantly outperforms the state-of-the-art certified defense for patch attacks on CIFAR-10, and, unlike previous methods, scales to ImageNet.
\section*{Broader Impact}
Adversarial patch attacks are extremely relevant to security threats posed by adversarial machine learning. In particular, patch attacks model physical adversarial attacks, in which real-world objects are manipulated in order to disrupt computer vision systems. Malicious use of such attacks could therefore be catastrophically damaging in highly critical applications of computer vision, such as self-driving cars. For example, consider an adversary that puts adversarial stickers on {\it stop signs} to cause significant errors in classification of those images by deep models deployed in autonomous vehicles. Such an attack could cause significant damage. Moreover, the existence of such vulnerabilities in deep models can harm the confidence of users of systems that employ such models, which could slow adoption of these systems. Our proposed techniques in this paper provide new provable and guaranteed defenses against these attacks, advancing the effort to mitigate these issues.

We note that the algorithms described in this paper are purely defensive. That is, this work does not reveal (in any way obvious to the authors) any unknown vulnerabilities in existing computer vision systems. While we do develop an adversarial attack against our classifier, this attack is a straightforward extension of existing work on adversarial attacks to smoothed classifiers \cite{salman2019provably}: implementing this attack represents necessary due diligence to evaluate the robustness of our defense.

One possible negative outcome of provable robustness guarantees is that they may cause users to be overconfident in the reliability of machine learning systems in general.  As we have demonstrated in Section \ref{sec:emprobust}, our techniques do \textit{not} provide robustness to general adversarial attacks other than patch attacks. Further, a guarantee of robustness is not a guarantee of correctness: in fact, the accuracy of our classifiers is reduced compared to undefended models. Users should be aware of these issues before applying these techniques in critical applications.

Additionally, we acknowledge that, as for any computer vision advance, malicious actors could also use the techniques demonstrated here. For example, the application of these techniques could make it more difficult to thwart excessive and unwanted surveillance, posing a potential privacy concern. However, given the important safety applications described above, we believe that making this work available will have an overall positive impact.
\section*{Acknowledgements}
This project was supported in part by NSF CAREER AWARD 1942230, grants from NIST 60NANB20D134, HR001119S0026, HR00111990077, HR00112090132 and Simons Fellowship on ``Foundations of Deep Learning.''

\bibliographystyle{unsrt}
\bibliography{example_paper}
\appendix
\section{Proofs}
We first prove the block smoothing algorithm. Recall the definitions and statement of Theorem 1. In particular, recall the base classification counts $n_c(\bx)$:
\begin{equation}
        \forall c,\,\,\, n_c(\bx) := \sum ^{w}_{x = 1} \sum ^{h}_{y = 1}  f_c(\bx,s,x,y)
\end{equation}
And recall the definition of the smoothed classifier: 
\begin{equation}
         g(\bx) := \arg \max_c n_c(\bx),
\end{equation}
where in the case of ties, we choose the smaller-indexed class as the argmax solution.

\begin{appproposition} 
For any image $\bx$, base classifier $f$, smoothing block size $s$, and patch size $m$, if:
\begin{equation}
    n_c(\bx) \geq \max_{c' \neq c} \left[ n_{c'}(\bx) + \vone_{c > c'}  \right] + 2(m+s-1)^2 
\end{equation}
then for any image $\bx'$ which differs from $\bx$ only in an $(m \times m)$ patch, $g(\bx') = c$.
\end{appproposition}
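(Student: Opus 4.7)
The plan is to bound the number of block positions whose base classification can be corrupted by the patch, then translate this into a bound on the change in the class counts $n_c$, and finally apply the tie-breaking rule to conclude.

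First I would fix an $m \times m$ patch region inside $\bx$ that the adversary distorts to produce $\bx'$, and count the number of block positions $(x,y)$ for which the retained $s \times s$ block (with wrap-around indexing as in Figure \ref{fig:ablate_indexing}) has non-empty intersection with the patch. A retained block whose upper-left corner is at $(x,y)$ intersects the patch iff $(x,y)$ lies in a particular $(m+s-1) \times (m+s-1)$ region of upper-left corners (again with wrap-around). This gives exactly $(m+s-1)^2$ ``bad'' block positions. For every other $(x,y)$, the retained block sees the same pixels in $\bx$ and $\bx'$, so $f_c(\bx,s,x,y) = f_c(\bx',s,x,y)$ for every class $c$. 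This identification of the bad positions is the one concrete counting step, and I expect it to be the main (though not difficult) obstacle, essentially because one must be careful with the wrap-around convention.

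Next I would translate this into bounds on $n_c$. Since $f_c \in \{0,1\}$, each bad position can change $n_c$ by at most $1$ in either direction, so for every class $c$,
\begin{equation}
n_c(\bx) - (m+s-1)^2 \;\leq\; n_c(\bx') \;\leq\; n_c(\bx) + (m+s-1)^2.
\end{equation}
Combining the lower bound for the target class $c$ with the upper bound for a competitor $c' \neq c$, I get
\begin{equation}
n_c(\bx') - n_{c'}(\bx') \;\geq\; n_c(\bx) - n_{c'}(\bx) - 2(m+s-1)^2.
\end{equation}

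Finally I would plug in the hypothesis $n_c(\bx) \geq n_{c'}(\bx) + \vone_{c>c'} + 2(m+s-1)^2$, which gives $n_c(\bx') - n_{c'}(\bx') \geq \vone_{c>c'}$ for every $c' \neq c$. If $c < c'$, this reads $n_c(\bx') \geq n_{c'}(\bx')$, and the tie-breaking rule (smaller-indexed class wins) picks $c$; if $c > c'$, it reads $n_c(\bx') \geq n_{c'}(\bx') + 1$, which is a strict inequality. In either case $c$ beats every competitor in the argmax-with-tie-breaking, so $g(\bx') = c$, as desired. The band/column case of Theorem \ref{thm:band_smooth} would follow by an identical argument with the one-dimensional count $(m+s-1)$ of bad column positions replacing $(m+s-1)^2$.
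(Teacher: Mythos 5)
Your proposal is correct and follows essentially the same route as the paper's proof: count the $(m+s-1)^2$ block positions (with wrap-around) that can intersect the patch, deduce $|n_c(\bx)-n_c(\bx')|\leq(m+s-1)^2$ for every class, and then handle the two tie-breaking cases via the indicator $\vone_{c>c'}$. No gaps; your handling of the tie-breaking direction matches the paper's case analysis exactly.
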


\begin{proof}
Let $(i,j)$ represent the upper-right corner of the $m\times m$ patch in which $\bx$ and $\bx'$ differ. Note that, for all $c$, the output of $f_c(\bx, s,x,y)$ will be equal to the output of $f_c(\bx', s,x,y)$, unless the $s\times s$ block retained (starting at $(x,y)$) intersects with the $m\times m$ adversarial patch (starting at $(i,j)$). This condition occurs only when both $x$ is in the range between $i-s +1$ and $i+m-1$, inclusive, and $y$ is in the range between $j-s +1$ and $j+m-1$, inclusive. Note that there are $(m+s-1)$ values each for $x$ and $y$ which meet this condition, and therefore $(m+s-1)^2$ such pairs $(x,y)$. Therefore $f_c(\bx, s,x,y) = f_c(\bx', s,x,y)$ in all but $(m+s-1)^2$ cases.

Note that if $i-s +1 < 0$, then the intersecting values for $x$, taking into account the wrapping behavior of $f$, will be $h - (i-s +1)$ through $h$, and $0$ through $i+m-1$ (see Figure 4 in the main text): there are still $(m+s-1)$ such values, and a similar argument applies to $j$.

Therefore, because $f_c(\cdot) \in \{0,1\}$,
\begin{equation} \label{eq:triang}
   \forall c, |n_c(\bx)- n_c(\bx')| \leq (m+s-1)^2.
\end{equation}

Now, consider any $c' \neq c$, such that $  n_c(\bx) \geq \left[ n_{c'}(\bx) + \vone_{c > c'}  \right] + 2(m+s-1)^2 $. There are two cases:
\begin{itemize}
     \item $c > c'$:  In this case, in the event that $n_c(\vx') = n_c'(\vx')$, we have that $g(\vx') = c'$. Therefore, a sufficient condition for  $g(\vx') \neq c'$ is that  $n_c(\vx') > n_c'(\vx')$.  By Equation \ref{eq:triang}  and triangle inequality, this must be true if $  n_c(\bx) > \left[ n_{c'}(\bx) \right] + 2(m+s-1)^2 $, or equivalently, if  $  n_c(\bx) \geq \left[ n_{c'}(\bx) + \vone_{c > c'}  \right] + 2(m+s-1)^2 $.
    \item $c' > c$:  In this case, in the event that $n_c(\vx') = n_c'(\vx')$, we have that $g(\vx') = c'$. Therefore, a sufficient condition for  $g(\vx') \neq c'$ is that  $n_c(\vx') \geq n_c'(\vx')$. By Equation \ref{eq:triang}  and triangle inequality, this must be true if  $ n_c(\bx) \geq \left[ n_{c'}(\bx) + \vone_{c > c'}  \right] + 2(m+s-1)^2 $.
\end{itemize}
Therefore, if $ n_c(\bx) \geq \max_{c' \neq c} \left[ n_{c'}(\bx) + \vone_{c > c'}  \right] + 2(m+s-1)^2$, then no class other than $c$ can be output by $g(\bx')$.
\end{proof}
The column smoothing method can be proved similarly. For completeness, we state and prove Theorem 2 here as well. Recall
\begin{equation}
        \forall c,\,\,\, n_c(\bx) := \sum ^{w}_{x = 1}  f_c(\bx,s,x)\\
\end{equation}
\begin{appproposition}
For any image $\bx$, base classifier $f$, smoothing band size $s$, and patch size $m$, if:
\begin{equation}
    n_c(\bx) \geq \max_{c' \neq c} \left[ n_{c'}(\bx) + \vone_{c > c'}  \right] + 2(m+s-1)
\end{equation}
then for any image $\bx'$ which differs from $\bx$ only in an $(m \times m)$ patch, $g(\bx') = c$.
\end{appproposition}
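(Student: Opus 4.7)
The plan is to mirror the proof of Theorem 1 essentially verbatim, but one dimension lower, since column smoothing reduces the 2D combinatorics of the block case to 1D combinatorics. The only thing that can change between $\bx$ and $\bx'$ is contained in the $m\times m$ patch starting at some position $(i,j)$; but because the retained region is a full-height column of width $s$ indexed only by its leftmost column $x$, the vertical coordinate $j$ of the patch is irrelevant to whether the column intersects the patch. So the entire argument depends only on how many values of $x$ cause the retained column to meet the patch horizontally.

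First I would identify the set of ``bad'' column positions: the $s$-wide column with left edge at $x$ intersects the $m\times m$ patch iff $x$ lies in the range $[i-s+1,\,i+m-1]$ (interpreting indices modulo $w$ to account for wraparound, exactly as in Theorem 1's treatment of $i-s+1<0$). This range contains exactly $m+s-1$ values of $x$. For every other $x$, the retained pixels of $\bx$ and $\bx'$ coincide, so $f_c(\bx,s,x)=f_c(\bx',s,x)$ for all classes $c$. Since $f_c(\cdot)\in\{0,1\}$, this immediately yields the key Lipschitz-type bound
\begin{equation}
   \forall c,\quad |n_c(\bx)-n_c(\bx')|\ \leq\ m+s-1.
\end{equation}

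Next I would carry over the tie-breaking case analysis from Theorem 1 unchanged, since it depends only on the bound above and on the convention that $g$ returns the smaller-indexed class on ties. For any $c'\neq c$, I split on whether $c>c'$ or $c'>c$: in the first case a strict inequality $n_c(\bx')>n_{c'}(\bx')$ is needed to rule out $g(\bx')=c'$, while in the second a non-strict inequality suffices; these are respectively guaranteed by
\begin{equation}
n_c(\bx)\ >\ n_{c'}(\bx)+2(m+s-1)\quad\text{and}\quad n_c(\bx)\ \geq\ n_{c'}(\bx)+2(m+s-1),
\end{equation}
and both cases are captured uniformly by the hypothesis with the indicator $\vone_{c>c'}$. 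Taking the max over $c'\neq c$ then gives $g(\bx')=c$.

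I do not expect any real obstacle; the only subtlety worth being careful about is the wraparound at the image boundary, which I would handle exactly as Theorem 1 does by observing that the ``bad'' set of $x$-values still has cardinality $m+s-1$ when it wraps, so the bound is unaffected. Everything else is a direct translation of the block-smoothing proof to one dimension.
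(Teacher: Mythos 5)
Your proposal is correct and follows essentially the same route as the paper's proof: identify the $m+s-1$ column positions $x\in[i-s+1,\,i+m-1]$ (with wraparound) whose bands can intersect the patch, deduce $|n_c(\bx)-n_c(\bx')|\leq m+s-1$ from $f_c(\cdot)\in\{0,1\}$, and reuse the tie-breaking case analysis from the block-smoothing theorem verbatim. No gaps.
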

\begin{proof}

Let $(i,j)$ represent the upper-right corner of the $m\times m$ patch in which $\bx$ and $\bx'$ differ. Note that, for all $c$, the output of $f_c(\bx, s,x)$ will be equal to the output of $f_c(\bx', s,x)$, unless the band (of width $s$) retained, starting at column $x$, intersects with the $m\times m$ adversarial patch (starting at $(i,j)$). This condition occurs only when $x$ is in the range between $i-s +1$ and $i+m-1$, inclusive. Note that there are $(m+s-1)$ values for $x$ which meet this condition. Therefore $f_c(\bx, s,x,y) = f_c(\bx', s,x,y)$ in all but $(m+s-1)$ cases.

Again, if $i-s +1 < 0$, then the intersecting values for $x$, taking into account the wrapping behavior of $f$ will be $h - (i-s +1)$ through $h$, and $0$ through $i+m-1$: there are still $(m+s-1)$ such values. Therefore, because $f_c(\cdot) \in \{0,1\}$,
\begin{equation}
   \forall c, |n_c(\bx)- n_c(\bx')| \leq (m+s-1).
\end{equation}

The rest of the proof proceeds exactly as in the block smoothing case, with $(m+s-1)$ substituted for $(m+s-1)^2$.
\end{proof}

\section{Full Validation Result Tables for Column and Block Smoothing}
Tables \ref{fullMNISTdata} and \ref{fullCIFARdata} present the full validation set clean and certified accuracies for $5 \times 5$ patches on MNIST and CIFAR-10, respectively, for all tested values of parameters $s$ and $\theta$, and for both block and column smoothing. Note that this is presented in Figure 5 in the main text. Table \ref{ImageNetData} presents the validation set clean and certified accuracies for $42 \times 42$ patches on ImageNet using column smoothing, for all four tested values of the hyperparameter $\theta$.
\begin{table*} [t]
\centering
\begin{tabular}{|c|c|c|c|c|c|c|c|c|}
\hline
$\textbf{Column}$ & \multicolumn{2}{|c|}{} &  \multicolumn{2}{|c|}{}&  \multicolumn{2}{|c|}{}  & \multicolumn{2}{|c|}{ Top-1 class}\\
$\textbf{Size } s$ & \multicolumn{2}{|c|}{ $\theta= .2$} &  \multicolumn{2}{|c|}{ $\theta= .3$}&  \multicolumn{2}{|c|}{ $\theta= .4$} & \multicolumn{2}{|c|}{ (no threshold)}\\
\hline
 & Clean &  Cert&  Clean &  Cert&  Clean &  Cert&  Clean &  Cert \\
 & Acc &  Acc&  Acc &  Acc&  Acc &  Acc &  Acc &  Acc\\
\hline
1&93.50\%&47.52\%&93.22\%&47.82\%&92.56\%&45.42\%&50.04\%&14.78\%\\
2&96.68\%&51.46\%&\textbf{96.78\%}&\textbf{53.22\%}&96.36\%&52.34\%&72.80\%&19.22\%\\
3&97.84\%&45.92\%&97.70\%&47.22\%&97.46\%&39.14\%&82.36\%&19.48\%\\
4&97.88\%&38.92\%&97.92\%&32.98\%&97.84\%&32.52\%&85.46\%&19.86\%\\
5&98.24\%&32.62\%&98.26\%&25.72\%&98.06\%&25.26\%&93.20\%&21.50\%\\
6&98.44\%&27.60\%&98.30\%&21.52\%&98.24\%&20.42\%&95.42\%&22.20\%\\
7&98.58\%&14.14\%&98.60\%&15.94\%&98.56\%&15.98\%&97.56\%&20.34\%\\
8&98.70\%&10.04\%&98.68\%&11.52\%&98.70\%&11.76\%&97.90\%&18.90\%\\
9&98.88\%&06.52\%&98.82\%&08.16\%&98.74\%&08.32\%&98.48\%&17.28\%\\
\hline
$\textbf{Block}$ & \multicolumn{2}{|c|}{ } &  \multicolumn{2}{|c|}{}&  \multicolumn{2}{|c|}{} & \multicolumn{2}{|c|}{ Top-1 class}\\
$\textbf{Size } s$ & \multicolumn{2}{|c|}{ $\theta= .2$} &  \multicolumn{2}{|c|}{ $\theta= .3$}&  \multicolumn{2}{|c|}{ $\theta= .4$} & \multicolumn{2}{|c|}{ (no threshold)}\\
\hline

1&09.76\%&0\%&09.76\%&0\%&09.76\%&0\%&10.80\%&10.80\%\\
2&09.76\%&0\%&09.76\%&0\%&09.76\%&0\%&10.80\%&10.80\%\\
3&09.76\%&0\%&09.76\%&0\%&09.76\%&0\%&10.80\%&10.80\%\\
4&87.30\%&38.06\%&86.04\%&29.62\%&85.94\%&19.32\%&12.78\%&10.80\%\\
5&91.60\%&42.58\%&90.24\%&36.52\%&90.32\%&27.22\%&21.72\%&11.08\%\\
6&93.44\%&42.90\%&92.68\%&39.86\%&92.70\%&33.06\%&31.60\%&11.74\%\\
7&94.78\%&44.00\%&94.30\%&41.80\%&94.52\%&35.84\%&51.18\%&13.30\%\\
8&96.04\%&44.04\%&95.64\%&42.22\%&95.66\%&36.66\%&75.94\%&17.64\%\\
9&96.96\%&41.74\%&97.02\%&41.84\%&96.92\%&37.18\%&91.74\%&26.84\%\\
10&97.54\%&39.84\%&97.44\%&40.00\%&97.50\%&36.02\%&95.66\%&35.30\%\\
11&97.88\%&36.00\%&97.66\%&36.64\%&97.64\%&32.34\%&96.58\%&31.26\%\\
12&98.10\%&30.40\%&98.38\%&28.26\%&98.30\%&32.98\%&96.98\%&27.26\%\\
13&98.38\%&28.26\%&98.30\%&29.06\%&98.44\%&22.72\%&98.02\%&27.66\%\\
14&98.70\%&22.22\%&98.62\%&18.68\%&98.62\%&14.54\%&98.40\%&24.04\%\\
15&98.86\%&08.90\%&98.84\%&08.00\%&98.86\%&06.12\%&98.68\%&12.90\%\\
\hline
\end{tabular}
\caption{Validation set clean and certified accuracies for $5\times 5$ patch adversarial attacks using Block and Column smoothing on MNIST, with results shown for all tested values of parameters $s$ and $\theta$. The value with the highest certified accuracy is shown in bold. } \label{fullMNISTdata}
\end{table*}
\begin{table}[h]
    \centering
    \begin{tabular}{c|c|c|}
    & Clean &  Certified\\
    & Accuracy &  Accuracy\\
    \hline
         $s=25,\,\,\theta=0.1$&44.0\% &12.0\%\\
         $s=25,\,\,\theta=0.2$&\textbf{43.1\%} &\textbf{14.5\%} \\
         $s=25,\,\,\theta=0.3$&42.3\%&13.8\% \\
         $s=25,\,\,\theta=0.4$&40.9\% &12.3\%\\
    \end{tabular}
    \caption{Validation set clean and certified accuracies for $42\times 42$ patch adversarial attacks using Column smoothing on ImageNet, with results shown for all tested values of parameter $\theta$. The value with the highest certified accuracy is shown in bold.}
    \label{ImageNetData}
 \end{table}
\begin{table*} [t]
\centering
\begin{tabular}{|c|c|c|c|c|c|c|c|c|}
\hline
$\textbf{Column}$ & \multicolumn{2}{|c|}{} &  \multicolumn{2}{|c|}{}&  \multicolumn{2}{|c|}{}  & \multicolumn{2}{|c|}{ Top-1 class}\\
$\textbf{Size } s$ & \multicolumn{2}{|c|}{ $\theta= .2$} &  \multicolumn{2}{|c|}{ $\theta= .3$}&  \multicolumn{2}{|c|}{ $\theta= .4$} & \multicolumn{2}{|c|}{ (no threshold)}\\
\hline
 & Clean &  Cert&  Clean &  Cert&  Clean &  Cert&  Clean &  Cert \\
 & Acc &  Acc&  Acc &  Acc&  Acc &  Acc &  Acc &  Acc\\
\hline
1&72.92\%&50.74\%&72.58\%&50.94\%&72.90\%&50.32\%&72.30\%&50.94\%\\
2&77.54\%&53.26\%&77.70\%&54.14\%&77.68\%&53.04\%&77.10\%&53.40\%\\
3&81.84\%&56.14\%&81.74\%&56.76\%&81.98\%&56.08\%&81.82\%&56.24\%\\
4&84.04\%&56.62\%&\textbf{84.04\%}&\textbf{58.08\%}&84.16\%&57.12\%&83.66\%&57.36\%\\
5&85.98\%&55.18\%&85.66\%&56.08\%&85.82\%&55.98\%&85.32\%&56.00\%\\
6&87.70\%&54.84\%&87.90\%&56.26\%&88.04\%&56.10\%&87.62\%&55.70\%\\
7&89.24\%&53.12\%&89.48\%&54.36\%&89.30\%&54.04\%&89.12\%&54.14\%\\
8&90.60\%&51.38\%&90.68\%&52.90\%&90.60\%&53.12\%&90.34\%&53.02\%\\
9&91.38\%&47.78\%&91.30\%&49.96\%&91.38\%&50.30\%&91.12\%&50.36\%\\
10&91.66\%&46.26\%&91.74\%&49.00\%&91.62\%&49.44\%&91.56\%&49.58\%\\
11&92.40\%&41.24\%&92.26\%&45.12\%&92.18\%&46.08\%&92.18\%&45.90\%\\
\hline
$\textbf{Block}$ & \multicolumn{2}{|c|}{ } &  \multicolumn{2}{|c|}{}&  \multicolumn{2}{|c|}{} & \multicolumn{2}{|c|}{ Top-1 class}\\
$\textbf{Size } s$ & \multicolumn{2}{|c|}{ $\theta= .2$} &  \multicolumn{2}{|c|}{ $\theta= .3$}&  \multicolumn{2}{|c|}{ $\theta= .4$} & \multicolumn{2}{|c|}{ (no threshold)}\\
\hline
1&14.94\%&12.44\%&14.70\%&12.42\%&13.62\%&10.64\%&14.82\%&12.80\%\\
2&29.94\%&20.96\%&25.30\%&17.06\%&22.66\%&14.00\%&29.48\%&22.58\%\\
3&41.88\%&27.70\%&36.34\%&24.24\%&32.88\%&18.14\%&39.52\%&27.80\%\\
4&27.88\%&18.80\%&31.10\%&18.68\%&31.98\%&16.90\%&28.12\%&18.34\%\\
5&58.64\%&37.72\%&57.58\%&35.74\%&56.00\%&29.20\%&56.98\%&39.64\%\\
6&68.86\%&45.88\%&67.70\%&44.46\%&66.26\%&39.00\%&67.52\%&46.20\%\\
7&71.98\%&46.96\%&72.02\%&47.40\%&71.32\%&43.02\%&71.26\%&48.38\%\\
8&74.90\%&49.18\%&74.80\%&49.96\%&75.78\%&46.72\%&74.24\%&50.68\%\\
9&79.18\%&52.04\%&78.82\%&53.04\%&79.50\%&50.28\%&78.42\%&53.88\%\\
10&82.32\%&53.44\%&82.56\%&54.82\%&82.96\%&52.50\%&82.00\%&55.18\%\\
11&84.34\%&52.84\%&84.94\%&54.94\%&85.12\%&52.84\%&84.40\%&55.24\%\\
12&86.56\%&53.26\%&86.66\%&55.66\%&86.88\%&54.34\%&86.38\%&56.08\%\\
13&88.50\%&51.76\%&88.40\%&54.26\%&88.98\%&53.52\%&88.28\%&54.74\%\\
14&89.98\%&50.72\%&89.86\%&53.94\%&90.22\%&53.22\%&89.86\%&54.58\%\\
15&90.94\%&49.88\%&91.12\%&52.70\%&91.28\%&52.70\%&90.92\%&53.44\%\\
16&92.04\%&46.04\%&91.98\%&49.26\%&92.02\%&49.46\%&91.84\%&50.12\%\\
17&91.82\%&40.30\%&92.04\%&44.14\%&92.12\%&44.74\%&92.12\%&45.14\%\\
18&93.42\%&28.42\%&93.52\%&32.46\%&93.54\%&33.36\%&93.44\%&33.98\%\\
\hline
\end{tabular}
\caption{Validation set clean and certified accuracies for $5\times 5$ patch adversarial attacks using Block and Column smoothing on CIFAR-10, with results shown for all tested values of parameters $s$ and $\theta$. The value with the highest certified accuracy is shown in bold. } \label{fullCIFARdata}
\end{table*}

\section{Results for Row Smoothing}
We also tested smoothing with rows, rather than columns, on MNIST. This resulted in slightly lower certified accuracy under $5 \times 5$ patch attacks (45.32\% validation set certified accuracy, versus 53.22\% using column smoothing). Full results are presented in Table \ref{RowMNISTdata}.
\begin{table*} [h]
\centering
\begin{tabular}{|c|c|c|c|c|c|c|}
\hline
$\textbf{Row Size } s$ & \multicolumn{2}{|c|}{ $\theta= .2$} &  \multicolumn{2}{|c|}{ $\theta= .3$}&  \multicolumn{2}{|c|}{ $\theta= .4$} \\
\hline
 & Clean &  Certified&  Clean &  Certified&  Clean &  Certified \\
 & Accuracy &  Accuracy&  Accuracy &  Accuracy&  Accuracy &  Accuracy \\
\hline
1&88.46\%&36.54\%&85.26\%&33.78\%&82.86\%&25.52\%\\
2&95.58\%&43.52\%&\textbf{93.92\%}&\textbf{45.32\%}&92.04\%&43.16\%\\
3&96.28\%&41.80\%&95.26\%&44.96\%&94.08\%&43.74\%\\
4&97.26\%&38.58\%&96.40\%&42.02\%&95.70\%&41.82\%\\
5&97.74\%&35.74\%&97.00\%&39.04\%&96.52\%&39.54\%\\
6&97.60\%&32.18\%&97.18\%&36.98\%&96.92\%&37.10\%\\
7&98.04\%&27.32\%&97.62\%&32.82\%&97.48\%&33.50\%\\
8&98.30\%&23.16\%&98.18\%&28.26\%&98.06\%&29.54\%\\
9&98.24\%&17.60\%&97.96\%&23.80\%&97.92\%&25.12\%\\
\hline
\end{tabular}
\caption{Validation set clean and certified accuracies for $5\times 5$ patch adversarial attacks using Row smoothing on MNIST. Values with highest certified accuracies are shown in bold.} \label{RowMNISTdata}
\end{table*}
\section{ Multi-column and Multi-block Derandomized Smoothing}

In the main text, we argued for having the base classifier use a single contiguous group of pixels on the grounds that, compared to selecting individual pixels, it provides for a smaller risk of intersecting the adversarial patch. However, there may be some benefit to getting information from multiple distinct areas of an image, even if there is some associated increase in $\Delta$. Rather than just looking at the extremes of entirely independent pixels (Table \ref{L0_cert}) versus a single band or block (Figure 5 in the main text) we also explored, on MNIST, the intermediate case of using a small number of bands or blocks. In Table \ref{tab:fullmulticolumn}, we show all mathematically possible multiple-column certificates on MNIST, as well as several certificates for multiple-blocks with $s=4$. Interestingly, while the certificates using multiple columns are far below optimal, the certified accuracy for two blocks is only marginally below the best single-block certified accuracy. 
 \begin{table*}[h]
\centering
\begin{tabular}{|c|c|c|c|c|c|c|}
\hline
 & \multicolumn{2}{|c|}{ $\theta= .2$} &  \multicolumn{2}{|c|}{ $\theta= .3$}&  \multicolumn{2}{|c|}{ $\theta= .4$} \\
\hline
 & Clean &  Certified&  Clean &  Certified&  Clean &  Certified \\
 & Accuracy &  Accuracy&  Accuracy &  Accuracy&  Accuracy &  Accuracy \\
\hline
2 columns, $s=1$&96.80\%&38.12\%&\textbf{96.50\%}&\textbf{39.18\%}&96.24\%&37.38\%\\
2 columns, $s=2$&\textbf{98.38\%}&\textbf{31.36\%}&98.24\%&25.56\%&98.10\%&25.80\%\\
3 columns, $s=1$&97.74\%&07.58\%&\textbf{97.68\%}&\textbf{09.36\%}&97.64\%&09.00\%\\
2 blocks, $s=4$&\textbf{92.32\%}&\textbf{43.40\%}&91.22\%&38.78\%&91.32\%&30.08\%\\
3 blocks, $s=4$&\textbf{94.98\%}&\textbf{41.40\%}&94.42\%&39.38\%&94.46\%&32.62\%\\
4 blocks, $s=4$&\textbf{96.26\%}&\textbf{38.26\%}&95.72\%&37.50\%&95.72\%&32.02\%\\
         \hline

    \end{tabular}
    \caption{Multi-column and multi-block certificates, with results shown for all tested values of parameter $\theta$. Results are on the MNIST validation set, for $5\times 5$ patches. For each number of blocks/columns and block/column size $s$, we bold the highest certified accuracy over tested values of the hyperparameter $\theta$.}
    \label{tab:fullmulticolumn}
\end{table*}

For smoothing with multiple blocks or multiple columns, we consider only blocks or columns aligned to a grid starting at the upper-left corner of the image. For example, if using block size $s=4$, we consider only retaining blocks with upper-left corner $(i,j)$, where $i$ and $j$ are both multiples of $4$. This prevents retained blocks from overlapping, and also reduces the (large) number of possible selections of multiple blocks, allowing for derandomized smoothing.

Let the number of retained blocks or bands be $\kappa$, and, as in the paper, let the block or band size be $s$, the image size be $h \times w$, and the adversarial patch size be $m \times m$. For the block case, note that there are $\lceil h/ s \rceil \times \lceil w/ s \rceil$ such axis-aligned blocks. Of these, the adversarial patch will overlap at most $(\lceil (m-1)/ s \rceil+1)^2$ blocks. For example, for a $5 \times 5$ adversarial patch, using block size $s = 4$, the adversarial patch will overlap exactly $4$ blocks, regardless of position: see Figure \ref{fig:multiblock}.
\begin{figure}
    \centering
    \includegraphics[width=.2\textwidth]{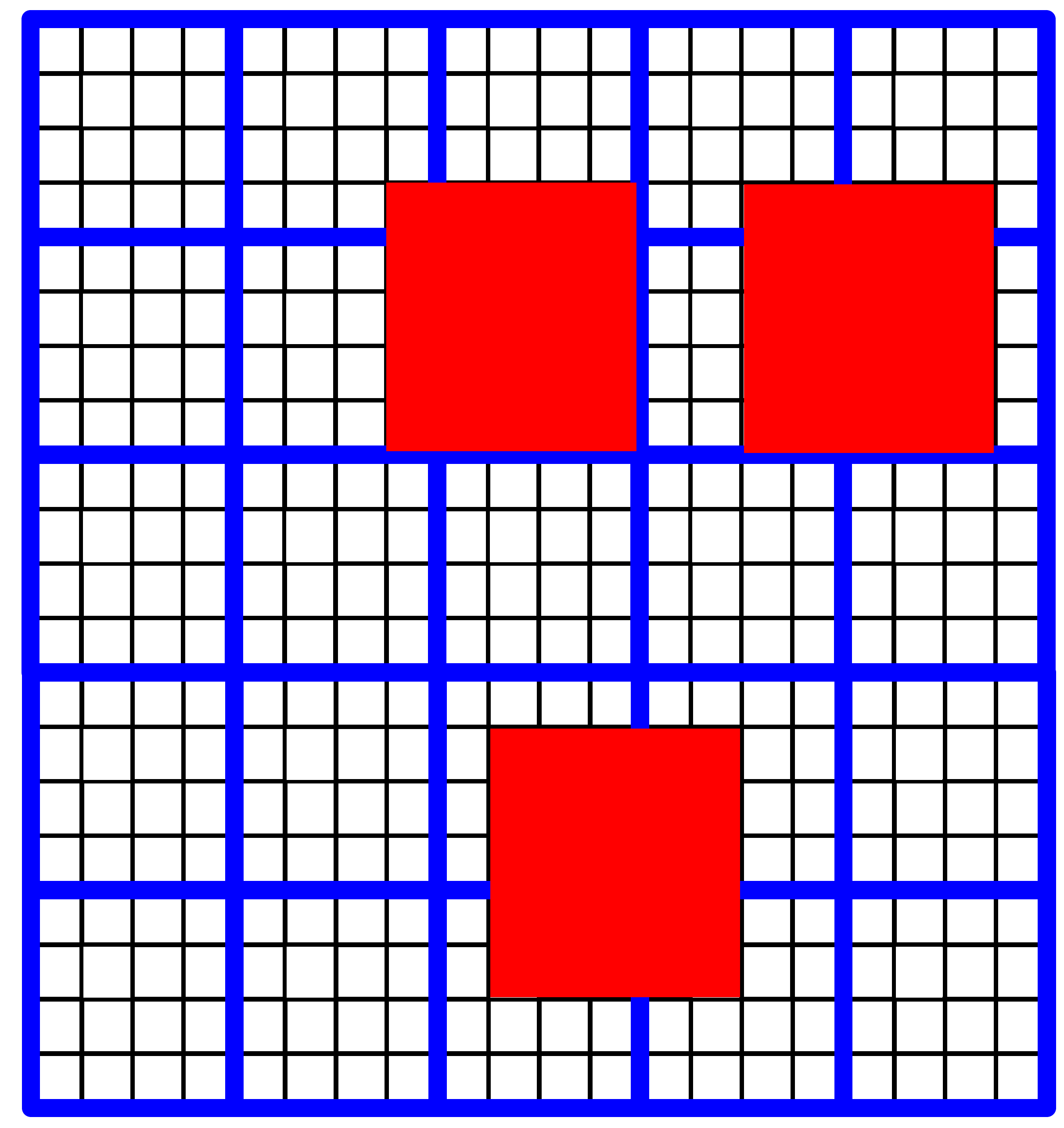}
    \caption{Multi-block smoothing: for a $5 \times 5$ adversarial patch, using block size $s = 4$, the adversarial patch overlaps exactly $4$ blocks, regardless of position. Individual pixels are represented by black gridlines. Blocks that may be retained are outlined in blue, and three possible $5 \times 5$ adversarial patches are shown in red. Note that this is exact because, in this case, $m-1$ is divisible by $s$: in other cases, some choices of adversarial patches may affect fewer than $(\lceil (m-1)/ s \rceil+1)^2$ blocks.}
    \label{fig:multiblock}
\end{figure}
When performing derandomized smoothing, we classify all $\binom{\lceil h/ s \rceil \times \lceil w/ s \rceil}{\kappa}$ possible choices of $\kappa$ blocks. Of these classifications, at least
\[\binom{\lceil \frac{h}{s} \rceil \times \lceil \frac{w}{s} \rceil - (\lceil \frac{m-1}{s} \rceil+1)^2}{\kappa}\]
will use none of the at most  $(\lceil (m-1)/ s \rceil+1)^2$  blocks which may be affected by the adversary. Therefore, the number of classifications which might be affected by the adversary is at most:
\[\binom{\lceil \frac{h}{s} \rceil \times \lceil \frac{w}{s} \rceil}{\kappa}-\binom{\lceil \frac{h}{s} \rceil \times \lceil \frac{w}{s} \rceil - (\lceil \frac{m-1}{s} \rceil+1)^2}{\kappa}.\]
We can then use the above quantity in place of the number of classifications $(m+s-1)^2$ that might be affected by the adversarial patch in standard block smoothing (Equation 4). This modification, in addition to classifying all $\binom{\lceil h/ s \rceil \times \lceil w/ s \rceil}{\kappa}$ selections of $\kappa$ axis-aligned blocks, is sufficient to adapt the certification algorithm to a multi-block setting.

The column case is similar: there are $\lceil w/ s \rceil $ axis-aligned bands (defined as bands which start at a column index which is a multiple of $s$). Of these, the adversarial patch will overlap at most $(\lceil (m-1)/ s \rceil+1)$ bands. When performing smoothing, we classify all $\binom{ \lceil w/ s \rceil}{\kappa}$ possible choices of $\kappa$ bands. Of these classifications, at least
\[\binom{ \lceil \frac{w}{s} \rceil - (\lceil \frac{m-1}{s} \rceil+1)}{\kappa}\]
will use none of the at most  $(\lceil (m-1)/ s \rceil+1)$  bands which may be affected by the adversary. Therefore, the number of classifications which might be affected by the adversary is at most:
\[\binom{\lceil \frac{w}{s} \rceil}{\kappa}-\binom{\lceil \frac{w}{s} \rceil - (\lceil \frac{m-1}{s} \rceil+1)}{\kappa}.\]
Full validation set results for multi-block and multi-band smoothing are shown in Table \ref{tab:fullmulticolumn}.
\section{Comparison with \textit{Randomized} Structured Ablation} 
As discussed in the main text, there are two benefits to derandomization: first, we can eliminate estimation error, and second, it allows the classifier to abstain or select multiple classes without complicating estimation. In order to distinguish these effects, we present in Tables \ref{tab:mnistrandom}  and \ref{tab:cifarrandom} the certificates on MNIST and CIFAR-10 using \textit{randomized} column smoothing (with the estimation scheme from \cite{cohen2019certified}), versus deterministic column smoothing. We compare to both the ``Top-1 class'' method (without abstaining or thesholding) as well as to the thresholding method, with $\theta = 0.3$. We find that derandomization alone, without the thresholding method, provides a considerable improvement (around 6 percentage points increase on MNIST, around 7 percentage points on CIFAR-10). On MNIST (although not on CIFAR-10), the thresholding scheme provides a  large additional improvement. 

\begin{table*} [h]
\centering
\begin{tabular}{|c|c|c|c|}
\hline
$\textbf{Column} $ & Derandomized&  Derandomized& Randomized \\
$\textbf{Size } s$ &  $\theta= .3$ &   Top-1 class&  Column Smoothing \\
\hline
1&47.82\%&14.78\%&11.80\%\\
2&\textbf{53.22\%}&19.22\%&14.26\%\\
3&47.22\%&19.48\%&15.14\%\\
4&32.98\%&19.86\%&15.24\%\\
5&25.72\%&21.50\%&15.48\%\\
6&21.52\%&\textbf{22.20\%}&\textbf{16.32\%}\\
7&15.94\%&20.34\%&14.50\%\\
8&11.52\%&18.90\%&14.52\%\\
9&08.16\%&17.28\%&14.10\%\\
\hline
\end{tabular}
\caption{Comparison of Derandomized  vs. Randomized Structured Ablation certified accuracies for $5\times 5$ adversarial patches on MNIST.} \label{tab:mnistrandom}
\end{table*}
\begin{table*} [h]
\centering
\begin{tabular}{|c|c|c|c|}
\hline
$\textbf{Column} $ & Derandomized&  Derandomized& Randomized \\
$\textbf{Size } s$ &  $\theta= .3$ &   Top-1 class&  Column Smoothing \\
\hline
1&50.94\%&50.94\%&38.16\%\\
2&54.14\%&53.40\%&41.98\%\\
3&56.76\%&56.24\%&47.02\%\\
4&\textbf{58.08\%}&\textbf{57.36\%}&49.56\%\\
5&56.08\%&56.00\%&49.58\%\\
6&56.26\%&55.70\%&\textbf{50.38\%}\\
7&54.36\%&54.14\%&50.04\%\\
8&52.90\%&53.02\%&48.94\%\\
9&49.96\%&50.36\%&47.28\%\\
10&49.00\%&49.58\%&46.46\%\\
11&45.12\%&45.90\%&43.28\%\\
\hline
\end{tabular}
\caption{Comparison of Derandomized  vs. Randomized Structured Ablation certified accuracies for $5\times 5$ adversarial patches on CIFAR-10.} \label{tab:cifarrandom}
\end{table*}
\section{Sparse Randomized Ablation for Patch adversarial Attacks}
In Table \ref{L0_cert}, we provide the certified accuracies computed from applying sparse Randomized Ablation \cite{levine2019robustness} to patch adversarial attacks, as discussed in Section 2.1 of the main text.
\begin{table} []
\centering
\begin{tabular}{c|c|c}
 \multicolumn{3}{c}{\textbf{MNIST}} \\
\hline
Retained&Classification&Certified \\
pixels $k$&accuracy&accuracy\\
\hline
5&32.44\%& 7.58\%\\
10&75.02\%&5.40\%\\
15&86.32\% &4.34\%\\
20&90.36\% &0.10\%\\
25&93.20\% &0\\
30&94.72\% &0\\
\end{tabular}
\quad
\begin{tabular}{c|c|c}
 \multicolumn{3}{c}{\textbf{CIFAR-10}} \\
 \hline
Retained&Classification&Certified \\
pixels $k$&accuracy&accuracy\\
\hline
25&68.28\%& 13.28\%\\
50&74.68\%&0\\
75&78.26\% &0\\
100&80.98\% &0\\
125&83.82\% &0\\
150&85.70\%&0\\
\end{tabular}
\caption{Certified accuracy to $5\times 5$ adversarial patches from directly applying $L_0$ smoothing as proposed by \cite{levine2019robustness}. Note that with $L_0$ smoothing, the geometry of the attack is not taken into consideration: these are therefore actually certified accuracies for any $L_0$ attack on up to $\rho=25$ pixels. The certificates are probabilistic, with $95\%$ confidence. }
\label{L0_cert}
\end{table}
\section{Adversarial Attack Details}
In order to test adversarial attacks against our structured ablation model (in particular the column smoothing model) we must work around the non-differentiability of the base classifier $f$ with respect to the image. We accomplish this using a method similar to the attack on smooth classifiers proposed by \cite{salman2019provably}.\\

In particular, as described in Section 2.2.1 in the main text, the base classifier $f$ in our model is implemented using a neural network: let $F$ represent the (SoftMax-ed) logits of this neural network:
\begin{equation}
    f_c(\rvx,s,x) = \begin{cases} 1, \text{   if   } F_c(\rvx,s,x) \geq \theta \\ 0, \text{   if   } F_c(\rvx,s,x) <\theta\end{cases}
\end{equation}
Rather than attacking $n(\bx) = \sum ^{w}_{x = 1}  f(\bx,s,x)$, we instead attack a soft smooth classifier, $N(\bx)$:
\begin{equation}
    N_c(\bx) := \frac{1}{w}  \sum ^{w}_{x = 1}  F_c(\bx,s,x)
\end{equation}
The objective of the adversary (as in \cite{Chiang2020Certified}) is now applied to this soft classifier:
\begin{equation} \label{eq:attack_objective_1}
\max_{\rvx \in \text{ (Patch Constraints) } } -\log( \frac{1}{w}  \sum ^{w}_{x = 1}  F_y(\bx,s,x)) 
\end{equation}
where $y$ is the true label. 
The IFGSM patch attack proposed by \cite{Chiang2020Certified} proceeds by first randomly selecting a patch to attack, and then attacking it with standard IFGSM, without imposing any $L_\infty$ magnitude constraint on the attack (other than as required to produce a feasable image). This is repeated many times on many random patches. However, the most successful attack so far is recorded at each step of optimization, and finally returned at the end of the attack. (Note that this is the most successful attack over all steps of all random initializations.) In \cite{Chiang2020Certified}, this is taken as whichever perturbed version of the image maximizes the objective (Equation \ref{eq:attack_objective_1}). Because we ultimately care about the ``hard'' smoothed classifier $n(\bx)$, we instead just evaluate the final ``hard'' classification $n(\bx)$ at each step. We record an attack to return only if it is actually successful at making the final classification incorrect. Note that this does not impose significant computational costs, because we already have the value of each `soft' base classifier $F_c(\bx)$ at each step.

As mentioned in the main text, for the patch attack, we perform  80 random starts, 150 iterations per random start, and use a step size of 0.05. When attacking patches, we uniformly randomly initialize the pixels in the attacked region. For $L_\infty$ IFGSM, we used IFGSM for 50 iterations and a step size of $0.5/255$: for this, we did not randomize the pixel values before optimizing, but rather started at the initial $\bx$. Training parameters for baseline models were identical to those for column-smoothed models, except that a regular, full ResNet-18 model was used.

\section{Evaluation Times}
Data on evaluation  times (using the optimal hyperparameters to maximize certified accureacy for each method) are shown in Table \ref{tab:evaluation_times}. We used NVIDIA 2080 Ti GPUs for our experiments.
\begin{table}[]
    \centering
    \begin{tabular}{|c|c|c|c|c|}
    \hline
        Method and Dataset & Images & Seconds & GPUs & GPU-seconds/image \\
        \hline
        Column, MNIST & 5000& 6.61 &1&0.00132\\
        Block, MNIST & 5000 &48.1&1&0.00962\\
        Column, CIFAR-10 &5000& 30.8 &1&0.00616\\
        Block, CIFAR-10 & 5000&851 &1&0.170\\
        Column, ImageNet & 1000 &622 &4&2.49\\
        \hline
    \end{tabular}
    \caption{Evaluation times. Note that evaluation and certification both require evaluating each base classifier, so these are also the certification times (our evaluation script reports both clean and certified accuracy). }
    \label{tab:evaluation_times}
\end{table}
\section{Architecture and Training Details}

\begin{table}[]
    \centering
    \begin{tabular}{|c|c|c|c|}
    \hline
      &\textbf{MNIST}&\textbf{CIFAR-10}&\textbf{ImageNet}\\
        \hline
        Training Epochs & 400&350&60\\
        \hline
        Batch Size & 128&128&196\\
        \hline
        Training &None& Random Cropping & Random\\
         Set && (Padding:4) and& Horizontal Flip\\
        Preprocessing &&  Random Horizontal Flip&\\
        \hline
        Optimizer & Stochastic & Stochastic & Stochastic  \\
        &  Gradient Descent&  Gradient Descent&  Gradient Descent\\
        &with Momentum&with Momentum&with Momentum\\
        \hline
        Learning Rate & .01 (Epochs 1-200) & .1 (Epochs 1-150) & .1 (Epochs 1-20) \\
        & .001 (Epochs 201-400)  & .01 (Epochs 151-250) & .01 (Epochs 21-40)\\
        && .001 (Epochs 251-350) & .001 (Epochs 41-60) \\
        \hline
        Momentum & 0.9  & 0.9  & 0.9  \\
        \hline
        $L_2$ Weight Penalty & 0.0005& 0.0005& 0.0005 \\
        \hline
    \end{tabular}
    \caption{Training Parameters}
    \label{tab:train}
\end{table}

As discussed in the paper, we used the method introduced by \cite{levine2019robustness} to represent images with pixels ablated: this requires increasing the number of input channels from one to two for greyscale images (MNIST) and from three to six for color images. For MNIST, we used the simple CNN architecture from the released code of \cite{levine2019robustness}, consisting of two convolutional layers  and three fully-connected layers. For CIFAR-10 and ImageNet, we used modified versions ResNet-18 and ResNet-50, respectively,  with the number of input channels increased to six. Training details are presented in Table \ref{tab:train}.

For randomized smoothing experiments, we follow the empirical estimation methods proposed by \cite{cohen2019certified}. We certify to $95\%$ confidence, using 1000 random samples to select the putative top class, and 10000 random samples to lower-bound the probability of this class. For sparse randomized ablation on MNIST, we use released pretrained models from \cite{levine2019robustness}.








\end{document}